\newtheorem{theorem}{Theorem}
\newtheorem{proposition}{Proposition}
\newcommand{\BlackBox}{\rule{1.5ex}{1.5ex}}  
\newenvironment{proof}{\par\noindent{\em Proof:\ }}{\hfill\BlackBox\\[.0mm]}
 \newcommand{\update}[1]{#1}
\DeclareFontFamily{U}{mathx}{\hyphenchar\font45}
\DeclareFontShape{U}{mathx}{m}{n}{
      <5> <6> <7> <8> <9> <10> gen * mathx
      <10.95> mathx10 <12> <14.4> <17.28> <20.74> <24.88> mathx12
      }{}
\DeclareSymbolFont{mathx}{U}{mathx}{m}{n}
\DeclareMathSymbol{\intop}  {1}{mathx}{"B3}
\let\temp\phi
\let\phi\varphi
\let\varphi\temp
\renewcommand{\sec}{\textsection}
\newcommand{\pr}{\mathbb{P}}
\newcommand{\R}{\mathbb{R}}
\newcommand{\E}{\mathbb{E}}
\newcommand{\normalN}{\mathcal{N}}
\renewcommand{\|}{\,|\,}            
\newcommand{\given}{\,|\,}  
\newcommand{\norm}[1]{\Vert#1\Vert}
\newcommand{\grad}{\nabla}
\DeclareMathOperator{\tr}{tr}
\DeclareMathOperator*{\argmin}{arg\,min}
\DeclareMathOperator{\ExpDist}{Exp}
\DeclareMathOperator{\UniformDist}{Unif}
\DeclareMathOperator{\GumbelDist}{Gumbel}
\newcommand{\dagspace}{\mathbb{D}} 
\newcommand{\wadj}{\R^{d\times d}}
\newcommand{\bin}{\{0,1\}^{d\times d}}
\newcommand{\adj}{\mathcal{A}}
\newcommand{\ecp}{\mathsf{ECP}}
\newcommand{\spec}{r}
\newcommand{\Gauss}{\mathsf{Gauss}}
\newcommand{\Exp}{\mathsf{Exp}}
\newcommand{\Gumbel}{\mathsf{Gumbel}}
\newcommand{\FGS}{\text{FGS}}
\newcommand{\fgsest}{B_{\FGS}}
\newcommand{\dat}{\mathbf{X}}
\newcommand{\method}{\text{NOTEARS}}
\newcommand{\methodreg}{\text{NOTEARS-}\ell_{1}}
\newcommand{\true}{W}
\newcommand{\truecol}{w}
\newcommand{\trueadj}{B}
\newcommand{\est}{\widehat{W}}
\newcommand{\adjest}{\widehat{B}}
\newcommand{\estecp}{\widetilde{W}_{\ecp}}
\newcommand{\estgob}{W_{\gr}}
\newcommand{\loss}{\ell}
\newcommand{\scoregr}{Q}
\newcommand{\score}{F}
\newcommand{\gr}{\mathsf{G}}
\newcommand{\ver}{\mathsf{V}}
\newcommand{\edg}{\mathsf{E}}
\DeclareMathOperator{\pa}{pa}
\DeclareMathOperator{\vect}{vec}
\title{\Large{DAGs with NO TEARS: Continuous Optimization for Structure Learning}}
\author[]{Xun Zheng}
\author[]{Bryon Aragam}
\author[]{Pradeep Ravikumar}
\author[]{Eric P. Xing}
\affil[]{\emph{Carnegie Mellon University}}
\begin{document}
\maketitle

{\let\thefootnote\relax\footnote{Contact: \texttt{xzheng1@andrew.cmu.edu, naragam@cs.cmu.edu, pradeepr@cs.cmu.edu, epxing@cs.cmu.edu}}}

\begin{abstract}
Estimating the structure of directed acyclic graphs (DAGs, also known as {Bayesian networks}) is a challenging problem since the search space of DAGs is combinatorial and scales superexponentially with the number of nodes. Existing approaches rely on various local heuristics for enforcing the acyclicity constraint. In this paper, we introduce a fundamentally different strategy: We formulate the structure learning problem as a purely \emph{continuous} optimization problem over real matrices that avoids this combinatorial constraint entirely. 
This is achieved by a novel characterization of acyclicity that is not only smooth but also exact.  The resulting problem can be efficiently solved by standard numerical algorithms, which also makes implementation effortless. The proposed method outperforms existing ones, without imposing any structural assumptions on the graph such as bounded treewidth or in-degree.
Code implementing the proposed algorithm is open-source and publicly available at \texttt{\url{https://github.com/xunzheng/notears}}.
\end{abstract}

\section{Introduction}
\label{sec:intro}

Learning directed acyclic graphs (DAGs) from data is an NP-hard problem \citep{chickering1996,chickering2004}, owing mainly to the combinatorial acyclicity constraint that is difficult to enforce efficiently. At the same time, DAGs are popular models in practice, with applications in biology \citep{sachs2005}, genetics \citep{zhang2013}, machine learning \citep{koller2009}, and causal inference \citep{spirtes2000}. For this reason, the development of new methods for learning DAGs remains a central challenge in machine learning and statistics. 

In this paper, we propose a new approach for score-based learning of DAGs by converting the traditional \emph{combinatorial} optimization problem (left) into a \emph{continuous} program (right):
\begin{align}
\label{eq:main:idea}
\begin{aligned}
\min_{W\in\wadj} & \ \ \score(W) \\
\text{subject to} & \ \ \gr(W) \in \mathsf{DAGs}
\end{aligned}
\quad \iff \quad 
\begin{aligned}
\min_{W\in\wadj} & \ \ \score(W) \\
\text{subject to} & \ \ h(W) = 0,
\end{aligned}
\end{align}
\noindent
where $\gr(W)$ is the $d$-node graph induced by the weighted adjacency matrix $W$,
$\score: \mathbb{R}^{d \times d} \to \mathbb{R} $ is a {score function} (see Section~\ref{sec:background:score} for details), 
and our key technical device $h: \mathbb{R}^{d \times d} \to \mathbb{R} $ is a smooth function over real matrices, \update{whose level set at zero exactly characterizes acyclic graphs.}
Although the two problems are equivalent, the continuous program on the right eliminates the need for specialized algorithms that are tailored to search over the combinatorial space of DAGs. 
Instead, we are able to leverage standard numerical algorithms for constrained problems, which makes implementation particularly easy, not requiring any knowledge about graphical models.
This is similar in spirit to the situation for undirected graphical models, in which the formulation of a \update{continuous} log-det program \citep{banerjee2008} sparked a series of remarkable advances in structure learning for undirected graphs (Section~\ref{sec:background:prev}).
\update{Unlike undirected models, which can be reduced to a convex program, however, the program \eqref{eq:main:idea} is \emph{nonconvex}. Nonetheless, as we will show, even na\"ive solutions to this program yield state-of-the-art results for learning DAGs.}

\paragraph{Contributions.} The main thrust of this work is to re-formulate score-based learning of DAGs so that 
standard smooth optimization
schemes such as L-BFGS~\citep{nocedal2006numerical} can be leveraged. 
To accomplish this, we make the following specific contributions:
\begin{itemize}
\item We explicitly construct a smooth function over $\wadj$ with computable derivatives that encodes the acyclicity constraint. This allows us to replace the combinatorial constraint $\gr\in\dagspace$ in \eqref{eq:main:opt} with a smooth equality constraint.
\item We develop an 
equality-constrained
program for simultaneously estimating the structure and parameters of a sparse DAG from possibly high-dimensional data, and show how standard numerical solvers can be used to find stationary points.
\item We demonstrate the effectiveness of the resulting method in empirical evaluations against existing state-of-the-arts.
See Figure~\ref{fig:compare:heatmap:er2} for a quick illustration and Section~\ref{sec:exp} for details.
\item We compare our ouput to the exact global minimizer \citep{cussens2012}, and show that our method attains scores that are comparable to the globally optimal score in practice, although our methods are only guaranteed to find stationary points.
\end{itemize}

Most interestingly, our approach is very simple and can be implemented in about 50 lines of Python code. As a result of its simplicity and effortlessness in its implementation, we call the resulting method $\method$: \emph{Non-combinatorial Optimization via Trace Exponential and Augmented lagRangian for Structure learning.} 
\update{The implementation is publicly available at \texttt{\url{https://github.com/xunzheng/notears}}.}

\begin{figure}[t]
\centering
\begin{subfigure}[t]{0.138\textwidth}
\centering
\includegraphics[width=0.99\textwidth]{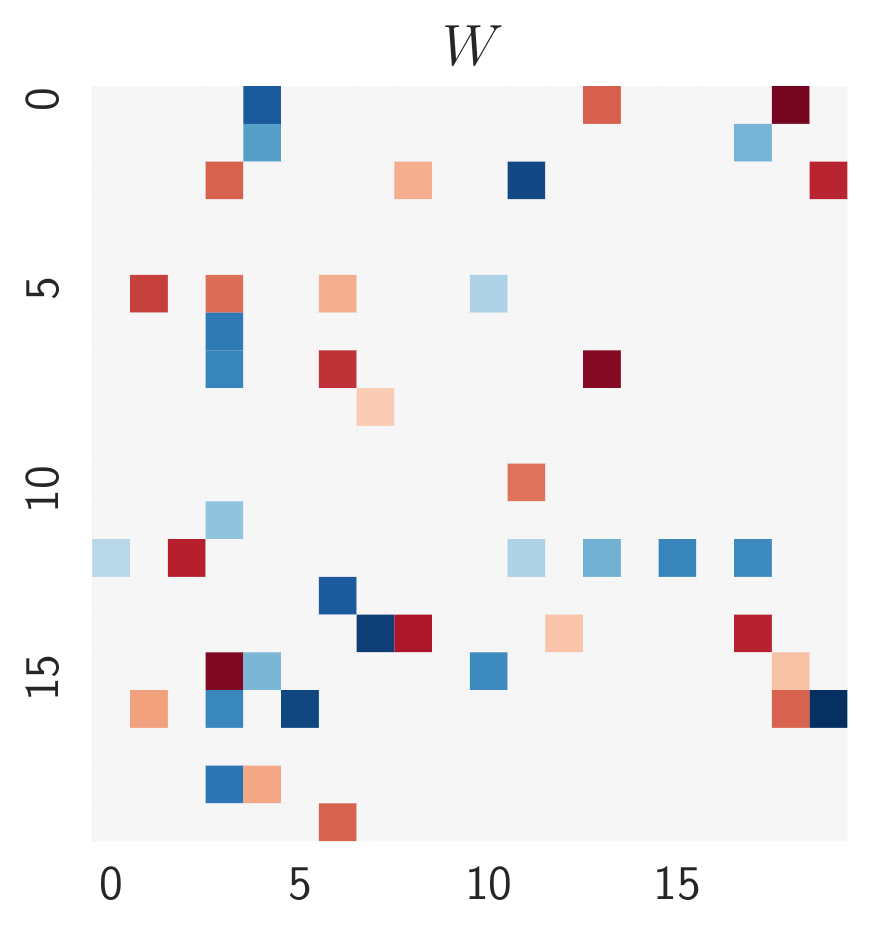}
\caption{true graph}
\end{subfigure}%
~
\begin{subfigure}[t]{0.43\textwidth}
\centering
\includegraphics[width=0.99\textwidth]{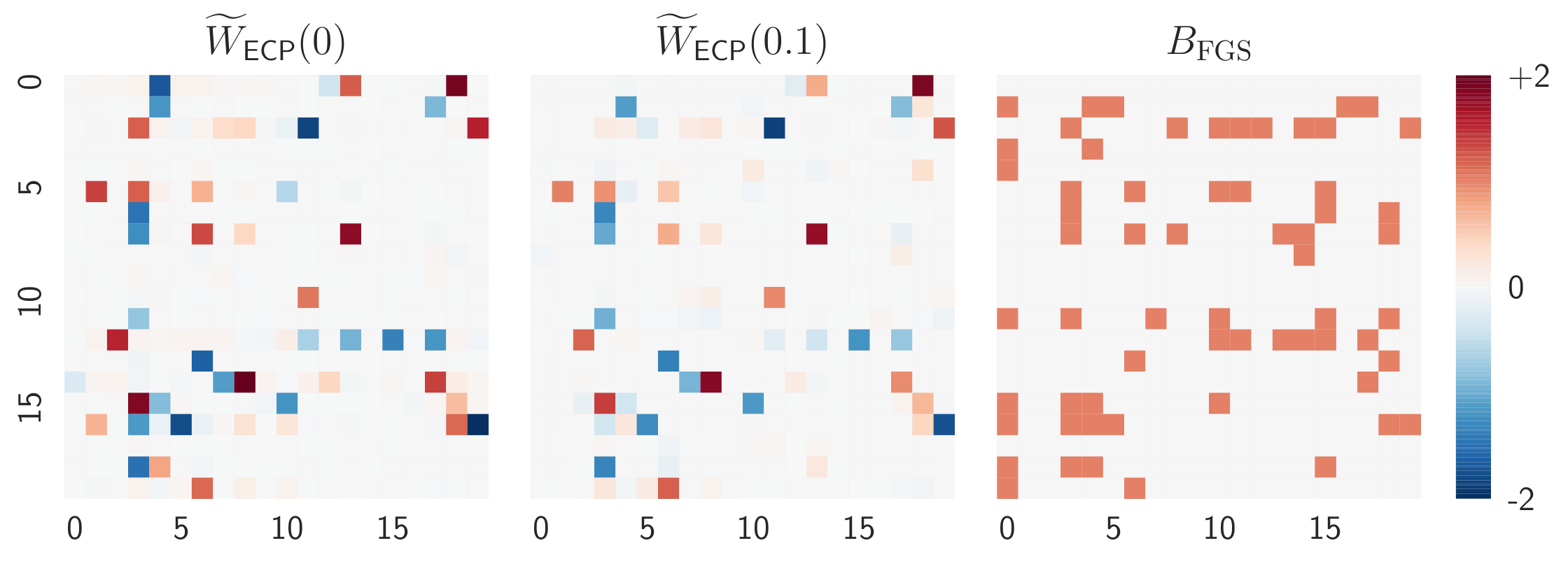}
\caption{estimate with $n=1000$}
\end{subfigure}%
\begin{subfigure}[t]{0.43\textwidth}
\centering
\includegraphics[width=0.99\textwidth]{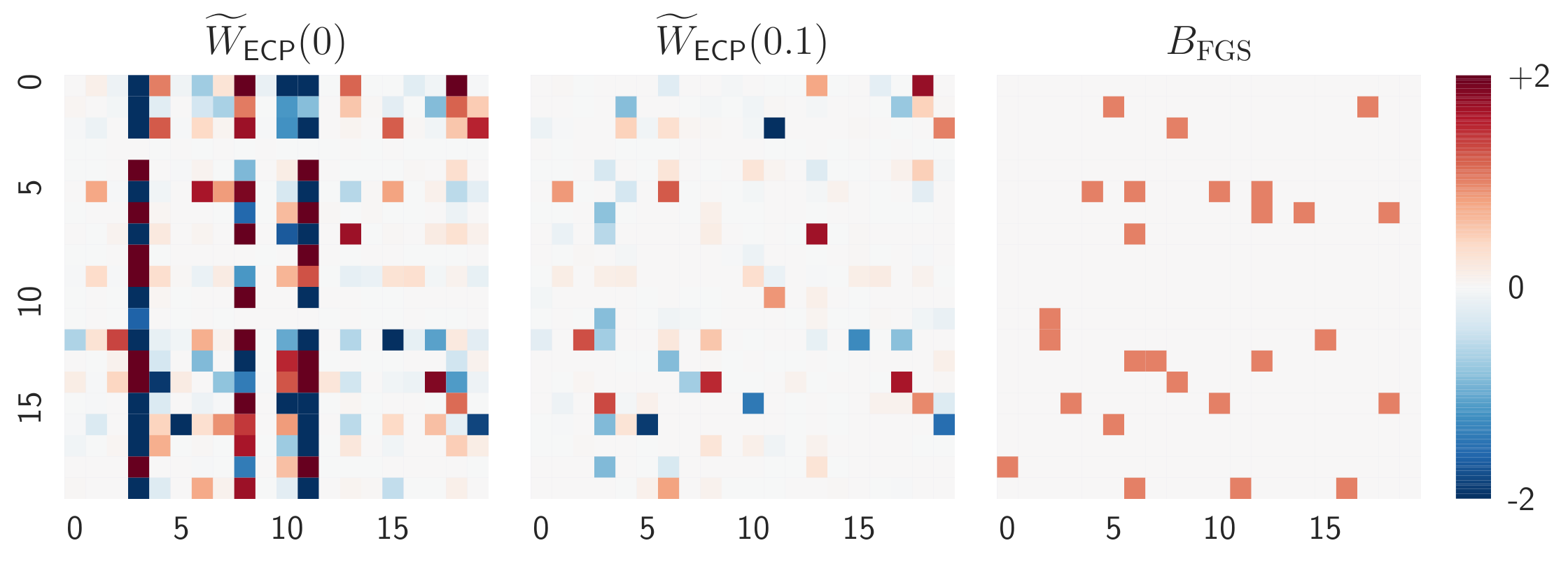}
\caption{estimate with $ n=20 $}
\end{subfigure}
\caption{Visual comparison of the learned weighted adjacency matrix on a 20-node graph with $ n=1000 $ (large samples) and $ n=20 $ (insufficient samples): $ \estecp(\lambda) $ is the proposed $ \method $ algorithm with $ \ell_1 $-regularization $ \lambda $, and $ \fgsest $ is the binary estimate of the baseline~\citep{ramsey2016}. 
The proposed algorithms perform well on large samples, and remains accurate on small $ n $ with $ \ell_1 $ regularization.}
\label{fig:compare:heatmap:er2}
\end{figure}

\section{Background}
\label{sec:background}

The basic DAG learning problem is formulated as follows: Let $\dat\in\R^{n\times d}$ be a data matrix consisting of $n$ i.i.d. observations of the random vector $X=(X_{1},\ldots,X_{d})$ and let $\dagspace$ denote the (discrete) space of DAGs $\gr=(\ver,\edg)$ on $d$ nodes. Given $\dat$, we seek to learn a DAG $\gr\in\dagspace$ (also called a \emph{Bayesian network}) for the joint distribution $\pr(X)$ \citep{spirtes2000,koller2009}. We model $X$ via a structural equation model (SEM) defined by a weighted adjacency matrix $W\in\wadj$. Thus, instead of operating on the discrete space $\dagspace$, we will operate on $\wadj$, the continuous space of $d\times d$ real matrices. 

\subsection{Score functions and SEM}
\label{sec:background:score}

Any $W\in\wadj$ defines a graph on $d$ nodes in the following way: Let $\adj(W)\in\bin$ be the binary matrix such that $[\adj(W)]_{ij}=1\iff w_{ij}\ne 0$ and zero otherwise; then $\adj(W)$ defines the adjacency matrix of a directed graph $\gr(W)$. In a slight abuse of notation, we will thus treat $W$ as if it were a (weighted) graph. In addition to the graph $\gr(W)$, $W=[\,w_{1}\|\cdots\|w_{d}\,]$ defines a linear SEM by $X_{j}=w_{j}^{T}X+z_{j}$, where $X=(X_{1},\ldots,X_{d})$ is a random vector and $z=(z_{1},\ldots,z_{d})$ is a random noise vector. We do \emph{not} assume that $z$ is Gaussian. More generally, we can model $X_{j}$ via a generalized linear model (GLM) $\E(X_{j}\given X_{\pa(X_{j})})=f(w_{j}^{T}X)$. For example, if $X_{j}\in\{0,1\}$, we can model the conditional distribution of $X_{j}$ given its parents via logistic regression. 

In this paper, we focus on linear SEM and the least-squares (LS) loss
$\loss(W;\dat)=\frac{1}{2n}\norm{\dat - \dat W}_{F}^{2}$,
although everything in the sequel applies to any smooth loss function $\loss$ defined over $\wadj$. The statistical properties of the LS loss in scoring DAGs have been extensively studied: The minimizer of the LS loss provably recovers a true DAG with high probability on finite-samples and in high-dimensions ($d\gg n$), and hence is consistent for both Gaussian SEM \citep{geer2013,aragam2016} and non-Gaussian SEM \citep{loh2014causal}.\footnote{Due to nonconvexity, there may be more than one minimizer: These and other technical issues such as parameter identifiability are addressed in detail in the cited references.} Note also that these results imply that the faithfulness assumption is not required in this set-up. Given this extensive previous work on statistical issues, our focus in this paper is entirely on the computational problem of finding an SEM that minimizes the LS loss.

This translation between graphs and SEM is central to our approach. Since we are interested in learning a \emph{sparse} DAG, we add $\ell_{1}$-regularization $ \norm{W}_{1} = \norm{\vect (W)}_1 $ resulting in the regularized score function
\begin{align}
\score(W) 
= \loss(W;\dat) + \lambda\norm{W}_{1}
= \frac{1}{2n}\norm{\dat - \dat W}_{F}^{2} + \lambda\norm{W}_{1}.
\end{align}
Thus we seek to solve
\begin{align}
\label{eq:exact:program}
\begin{aligned}
\min_{W\in\wadj} & \quad \score(W) \\
\text{subject to} & \quad  \gr(W)\in\dagspace.
\end{aligned} 
\end{align}
\noindent
Unfortunately, although $\score(W)$ is continuous, the DAG constraint $\gr(W)\in\dagspace$ remains a challenge to enforce. In Section~\ref{sec:acyclicity}, we show how this discrete constraint  can be replaced by a smooth equality constraint.

\subsection{Previous work}
\label{sec:background:prev}

Traditionally, score-based learning seeks to optimize a \emph{discrete score} $\scoregr:\dagspace\to\R$ over the set of DAGs $\dagspace$;
note that this is distinct from our score $\score(W)$ whose domain is $\wadj$ instead of $\dagspace$. 
This can be written as the following 
combinatorial optimization problem:
\begin{align}
\label{eq:main:opt}
\begin{aligned}
\min_{\gr} \quad  & \quad \scoregr(\gr) \\
\text{subject to} & \quad \gr\in\dagspace
\end{aligned}
\end{align}
\noindent
Popular score functions include BDe(u) \citep{heckerman1995}, BGe \citep{kuipers2014}, BIC \citep{chickering1997}, and MDL \citep{bouckaert1993}. Unfortunately, \eqref{eq:main:opt} is NP-hard to solve \cite{chickering1996,chickering2004} owing mainly to the nonconvex, combinatorial nature of the optimization problem. 
This is the main drawback of existing approaches for solving \eqref{eq:main:opt}:
The acyclicity constraint is a combinatorial constraint with the number of acyclic structures increasing superexponentially in $d$ \citep{robinson1977}. 
Notwithstanding, there are algorithms for solving \eqref{eq:main:opt} to global optimality for small problems \citep{ott2003,singh2005,silander2012,xiang2013,cussens2012,cussens2017}. There is also a wide literature on approximate algorithms based on order search \citep{teyssier2012,schmidt2007,scanagatta2015,scanagatta2016}, greedy search \citep{heckerman1995,chickering2003,ramsey2016}, and coordinate descent \citep{fu2013,aragam2015,gu2018}. \update{By searching over the space of topological orderings, the former order-based methods trade-off the difficult problem of enforcing acyclicity with a search over $d!$ orderings, whereas the latter methods enforce acyclicity one edge at a time, explicitly checking for acyclicity violations each time an edge is added.}
Other approaches that avoid optimizing \eqref{eq:main:opt} directly include constraint-based methods \citep{spirtes1991,spirtes2000}, hybrid methods \citep{tsamardinos2006,gamez2011}, and Bayesian methods \citep{ellis2008,zhou2011,niinimaki2016}. 

The intractable form of the program \eqref{eq:main:opt} has led to a host of heuristic methods, often borrowing tools from the optimization literature, but always resorting to clever heuristics to accelerate algorithms. 
Here we briefly discuss some of the pros and cons of existing methods. While not all methods suffer from \emph{all} of the problems highlighted below, we are not aware of any methods that simultaneously avoid all of them.

\paragraph{Exact vs. approximate.}
Broadly speaking, there are two camps: \emph{Approximate} algorithms and \emph{exact} algorithms, the latter of which are guaranteed to return a globally optimal solution. Exact algorithms form an intriguing class of methods, but as they are based around an NP-hard combinatorial optimization problem, these methods remain computationally intractable in general. For example, recent state-of-the-art work \citep{cussens2017,chen2016bn} only scale to problems with a few dozen nodes \citep{van-beek2015}.\footnote{\citet{cussens2012} reports experiments with $d>60$ under a constraint on the maximum parent size.} Older methods based on dynamic programming methods \citep{ott2003,singh2005,silander2012,xiang2013,loh2014causal} also scale to roughly a few dozen nodes. By contrast, state-of-the-art approximate methods can scale to thousands of nodes \citep{ramsey2016,aragam2015,scanagatta2015,scanagatta2016}. 

\paragraph{Local vs. global search.}
Arguably the most popular approaches to optimizing \eqref{eq:main:opt} involve \emph{local} search, wherein edges and parent sets are added sequentially, one node at a time. This is efficient as long as each node has only a few parents, but as the number of possible parents grows, local search rapidly becomes intractable. Furthermore, such strategies typically rely on severe structural assumptions such as bounded in-degree, bounded treewidth, or edge constraints.
Since real-world networks often exhibit scale-free and small-world topologies \citep{watts1998,barabasi1999} with highly connected hub nodes, these kinds of structural assumptions are not only difficult to satisfy, but impossible to check. 
We note here promising work towards relaxing this assumption for discrete data \citep{scanagatta2015}. By contrast, our method uses \emph{global} search wherein the entire matrix $W$ is updated in each step.

\paragraph{Model assumptions.}
The literature on DAG learning tends to be split between methods that operate on discrete data vs. methods that operate on continuous data. When viewed from the lens of \eqref{eq:exact:program}, the reasons for this are not clear since both discrete and continuous data can be considered as special cases of the general score-based learning framework. Nonetheless, many (but not all) of the methods cited already only work under very specific assumptions on the data, the most common of which are categorical (discrete) and Gaussian (continuous). Since \eqref{eq:exact:program} is agnostic to the form of the data and loss function, there is significant interest in finding general methods that are not tied to specific model assumptions.

\paragraph{Conceptual clarity.}
Finally, on a higher level, a significant drawback of existing methods is their conceptual complexity: They are not straightforward to implement, require deep knowledge of concepts from the graphical modeling literature, and accelerating them involves many clever tricks. By contrast, the method we propose in this paper is conceptually very simple, requires no background on graphical models, and can be implemented in just a few lines of code using existing black-box solvers.

\subsection{Comparison}
 
It is instructive to compare existing methods for learning DAGs against other methods in the machine learning literature. We focus here on two popular models: Undirected graphical models and deep neural networks. Undirected graphical models, also known as Markov networks, is recognized as a convex problem \citep{yuan2007,banerjee2008} nowadays, and hence can be solved using black-box convex optimizers such as CVX \citep{cvx}. 
However, one should not forget score-based methods based on discrete scores similar to \eqref{eq:main:opt} proliferated in the early days for learning undirected graphs \citep[e.g.][\sec20.7]{koller2009}.
More recently, extremely efficient algorithms have been developed for this problem using coordinate descent \citep{friedman2008} and Newton methods \citep{hsieh2014quic,schmidt2009}. 
As another example, deep neural networks are often learned using various descendants of stochastic gradient descent (SGD) \citep{bousquet2008,kingma2014,bottou2016}, although recent work has proposed other techniques such as ADMM \citep{taylor2016} and Gauss-Newton \citep{botev2017}. One of the keys to the success of both of these models---and many other models in machine learning---was having a closed-form, tractable program for which existing techniques from the extensive optimization literature could be applied. In both cases, the application of principled optimization techniques led to significant breakthroughs. For undirected graphical models the major technical tool was convex optimization, and for deep networks the major technical tool was SGD. 

Unfortunately, the general problem of DAG learning has not benefited in this way, and one of our main goals in the current work is to formulate score-based learning similarly as a closed-form, continuous program. Arguably, the challenges with existing approaches stem from the intractable form of the program \eqref{eq:main:opt}. 
One of our main goals in the current work is to formulate score-based learning via a similar closed-form, continuous program. The key device in accomplishing this is a smooth characterization of acyclicity that will be introduced in the next section.

\section{A new characterization of acyclicity}
\label{sec:acyclicity}

In order to make \eqref{eq:exact:program} amenable to black-box optimization, we propose to replace the combinatorial acyclicity constraint $\gr(W)\in\dagspace$ in \eqref{eq:exact:program} with a single smooth equality constraint $h(W)=0$. 
Ideally, we would like a function $h: \wadj \to \R$ that satisfies the following desiderata: 
\begin{enumerate}[label=(\alph*), itemsep=0pt]
\item $h(W)=0$ if and only if $W$ is acyclic (i.e. $\gr(W)\in\dagspace$);
\item The values of $h$ quantify the ``DAG-ness'' of the graph;
\item $h$ is smooth;
\item $h$ and its derivatives are easy to compute.
\end{enumerate}

\update{
Property (b) is useful in practice for diagnostics.
By ``DAG-ness'', we mean some quantification of how severe violations from acyclicity become as $W$ moves further from $\dagspace$.
Although there are many ways to satisfy (b) by measuring some notion of ``distance'' to $\dagspace$, typical approaches would violate (c) and (d).
For example, 
$h$ might be the minimum $\ell_2$ distance to $\dagspace$
or it might be the sum of edge weights along all cyclic paths of $W$,
however, these are either non-smooth (violating (c)) or  hard to compute (violating (d)).
If a function that satisfies desiderata (a)-(d) exists, we can hope to apply existing machinery for constrained optimization such as Lagrange multipliers. Consequently, the DAG learning problem becomes equivalent to solving a numerical optimization problem, which is agnostic about the graph structure. 
}

We proceed in two steps: First, we consider the simpler case of binary adjacency matrices $B \in \bin$ (Section~\ref{app:acyclicity:bin}).
Note that since $\bin$ is a discrete space, we cannot take gradients or do continuous optimization. For this we need the second step, in which we relax the function we originally define on binary matrices to real matrices (Section~\ref{app:acyclicity:wadj}).

\subsection{Special case: Binary adjacency matrices}
\label{app:acyclicity:bin}
 
When does a matrix $B\in\bin$ correspond to an acyclic graph? 
Recall the \emph{spectral radius} $\spec(B)$ of a matrix $B$ is the largest absolute eigenvalue of $B$. One simple characterization of acyclicity is the following:

\begin{proposition}[Infinite series]
\label{prop:traceinv}
Suppose $B\in\bin$ and $\spec(B)<1$. Then $B$ is a DAG if and only if
\begin{align}
\tr (I - B)^{-1} = d. 
\end{align}
\end{proposition}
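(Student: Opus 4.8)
The plan is to expand $(I-B)^{-1}$ as a Neumann series and then read off its trace as a generating function that counts closed walks in $\gr(B)$. Since $\spec(B)<1$, the series $\sum_{k=0}^{\infty}B^{k}$ converges to $(I-B)^{-1}$. Because the trace is a continuous linear functional on $\wadj$ (a finite sum of entries), I may pass it through the limit of the partial sums to obtain
\[
\tr(I-B)^{-1}=\sum_{k=0}^{\infty}\tr(B^{k})=d+\sum_{k=1}^{\infty}\tr(B^{k}),
\]
using $\tr(B^{0})=\tr(I)=d$.

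Next I would invoke the standard combinatorial reading of powers of a $0/1$ matrix: $[B^{k}]_{ij}$ is exactly the number of directed walks of length $k$ from $i$ to $j$ in $\gr(B)$, so every diagonal entry $[B^{k}]_{ii}$ is a nonnegative integer and $\tr(B^{k})=\sum_{i}[B^{k}]_{ii}$ equals the number of closed directed walks of length $k$. Hence each summand $\tr(B^{k})$ with $k\ge 1$ is a nonnegative integer, and the nonnegative series $\sum_{k\ge1}\tr(B^{k})$ equals $0$ if and only if every term is $0$, i.e. if and only if $\gr(B)$ has no closed directed walk of any positive length.

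It then remains to match ``no closed walk of positive length'' with acyclicity. If $\gr(B)$ is a DAG, fix a topological ordering of the vertices; every edge strictly increases the order, so a walk can never return to its starting vertex, giving $\tr(B^{k})=0$ for all $k\ge1$ and therefore $\tr(I-B)^{-1}=d$. Conversely, if $\gr(B)$ contains a directed cycle, that cycle is in particular a closed walk of some length $\ell$ with $1\le\ell\le d$, so $\tr(B^{\ell})\ge 1$; combining this with nonnegativity of the remaining terms yields $\tr(I-B)^{-1}\ge d+1>d$. Together these two implications give the claimed equivalence.

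The argument is essentially routine; the only steps requiring a little care are the justification for exchanging the trace with the infinite sum --- which is exactly what the hypothesis $\spec(B)<1$ buys us, via convergence of the Neumann series --- and the observation that, because all summands are nonnegative, a single cycle anywhere in $\gr(B)$ already forces $\tr(I-B)^{-1}>d$. I would also remark that the sum may be truncated at $k=d$, since any directed cycle has length at most $d$; this foreshadows the later passage to a polynomial (rather than infinite-series) characterization.
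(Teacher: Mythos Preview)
Your proof is correct and follows essentially the same route as the paper: expand $(I-B)^{-1}$ via the Neumann series (using $\spec(B)<1$), interchange trace and sum, and then read $\tr(B^{k})$ combinatorially as the number of length-$k$ closed walks, so that the sum vanishes exactly when $\gr(B)$ is acyclic. Your write-up is somewhat more careful than the paper's---you justify the trace/limit interchange, spell out both directions of the equivalence separately, and note the truncation at $k=d$---but the underlying argument is identical.
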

\begin{proof}
It essentially boils down to the fact that $\tr B^{k}$ counts the number of length-$k$ closed walks in a directed graph. Clearly an acyclic graph will have $\tr B^{k}=0$ for all $k = 1, \dotsc, \infty $. 
In other words, $B$ has no cycles if and only if $f(B)=\sum_{k=1}^{\infty}\sum_{i=1}^{d}(B^{k})_{ii}=0$,  then 
\begin{align*}
\tr(I-B)^{-1}
= \tr\sum_{k=0}^{\infty}B^{k} 
= \tr I + \sum_{k=1}^{\infty}\tr B^{k} 
= d + \sum_{k=1}^{\infty}\sum_{i=1}^{d} (B^{k})_{ii} 
= d + f(B).
\end{align*}
The desired result follows.
\end{proof}

Unfortunately, the condition that $\spec(B)<1$ is strong: although it is automatically satisfied when $ B $ is a DAG, it is generally not true otherwise, and furthermore the projection is nontrivial. 
Alternatively, instead of the infinite series, one could consider the characterization based on \emph{finite} series $ \sum_{k=1}^{d} \tr B^{k} = 0 $, which does not require $ \spec(B) < 1 $. 
However, this is impractical for numerical reasons: The entries of $B^{k}$ can easily exceed machine precision for even small values of $ d $, which makes both function and gradient evaluations highly unstable. Therefore it remains to find a characterization that not only holds for all possible $ B $, but also has numerical stability.  
Luckily, such function exists.

\begin{proposition}[Matrix exponential]
\label{prop:matrixexp}
A binary matrix $B\in\bin$ is a DAG if and only if
\begin{align}
\label{eq:prop:matrixexp}
\tr e^{B} = d.
\end{align}
\end{proposition}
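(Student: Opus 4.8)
The plan is to mimic the proof of Proposition~\ref{prop:traceinv}, simply replacing the constant weight $1$ on each $\tr B^{k}$ by the factorial weight $1/k!$ coming from the power series of the matrix exponential. The crucial observation is that these weights are all strictly positive, so they do not change which matrices $B$ make the series vanish, but they \emph{do} guarantee that the series converges for every $B$ (the matrix exponential is entire) — which is exactly what was missing from the infinite-series characterization and is why no spectral-radius hypothesis is needed here.

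First I would expand $e^{B}=\sum_{k=0}^{\infty}B^{k}/k!$ and take the trace term by term (legitimate, since the series converges absolutely and the trace is linear and continuous), obtaining
\[
\tr e^{B} = \tr I + \sum_{k=1}^{\infty}\frac{\tr B^{k}}{k!} = d + \sum_{k=1}^{\infty}\frac{\tr B^{k}}{k!}.
\]
Next, since $B$ has nonnegative entries, every power $B^{k}$ has nonnegative entries, so $\tr B^{k}\ge 0$ for all $k\ge 1$; hence the right-hand sum is a sum of nonnegative terms with strictly positive coefficients. Therefore $\tr e^{B}=d$ if and only if $\tr B^{k}=0$ for every $k\ge 1$.

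It then remains to invoke the combinatorial fact already used in Proposition~\ref{prop:traceinv}: $\tr B^{k}=\sum_{i}(B^{k})_{ii}$ counts the number of length-$k$ closed walks in the directed graph $\gr(B)$. If $\gr(B)$ is a DAG there are no closed walks of any length, so all these traces vanish and $\tr e^{B}=d$; conversely, any directed cycle of length $\ell\le d$ gives at least one closed walk of length $\ell$, so $\tr B^{\ell}\ge 1$ and, since all other terms are nonnegative, $\tr e^{B}\ge d+1/\ell! > d$. (Equivalently, for the forward direction one can note that a DAG admits a topological order making $B$ strictly upper triangular, hence nilpotent with zero-diagonal powers.)

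I do not expect any serious obstacle; the argument is essentially a one-line modification of the previous proof. The only points requiring a moment of care are the justification for exchanging the trace with the infinite sum, and the emphasis that — unlike the $\tr(I-B)^{-1}$ characterization — this one needs no assumption on $\spec(B)$, because $\sum_{k}B^{k}/k!$ converges unconditionally. The single indispensable ingredient is the nonnegativity of the entries of $B$, which is what upgrades ``the weighted sum is zero'' to ``each $\tr B^{k}$ is zero.''
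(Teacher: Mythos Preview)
Your proposal is correct and follows essentially the same approach as the paper's proof, which is a one-line remark that the argument of Proposition~\ref{prop:traceinv} carries over with the weights $1/k!$: $B$ is acyclic iff $(B^{k})_{ii}=0$ for all $k\ge 1$ and all $i$, iff $\sum_{k\ge 1}\sum_{i}(B^{k})_{ii}/k!=\tr e^{B}-d=0$. You have simply made explicit the nonnegativity step and the absolute convergence of the exponential series that the paper leaves implicit.
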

\begin{proof}
Similar to Proposition~\ref{prop:traceinv} by noting that $B$ has no cycles if and only if $(B^{k})_{ii}=0$ for all $k\ge 1$ and all $i$, which is true if and only if $\sum_{k=1}^{\infty}\sum_{i=1}^{d}(B^{k})_{ii}/k!=\tr e^{B}-d=0$.
\end{proof}

It is worth pointing out that matrix exponential is well-defined for all square matrices. 
In addition to everywhere convergence, this characterization has an added bonus: As the number of edges in $B$ increases along with the number of nodes $d$, the number of possible closed walks grows rapidly, so the trace characterization $\tr(I-B)^{-1}$ rapidly becomes ill-conditioned and difficult to manage. By re-weighting the number of length-$k$ closed walks by $k!$, this becomes much easier to manage.
While this is a useful characterization, it does not satisfy all of our desiderata since---being defined over a discrete space---it is not a smooth function. The final step is to extend Proposition~\ref{prop:matrixexp} to all of $\R^{d\times d}$.

\subsection{The general case: Weighted adjacency matrices}
\label{app:acyclicity:wadj}

Unfortunately, the characterization \eqref{eq:prop:matrixexp} fails if we replace $B$ with an arbitrary weighted matrix $W$. However, we can replace $B$ with any \emph{nonnegative} weighted matrix, and the same argument use to prove Proposition~\ref{prop:matrixexp} shows that \eqref{eq:prop:matrixexp} will still characterize acyclicity. Thus, to extend this to matrices with both positive and negative values, we can simply use the Hadamard product $W\circ W$, which leads to our main result.

\begin{theorem}
\label{thm:ideal}
A matrix $W\in\wadj$ is a DAG if and only if 
\begin{align}
\label{eq:def:h}
h(W) = \tr \big( e^{W \circ W} \big) - d = 0,
\end{align}
where $ \circ $ is the Hadamard product and $ e^A $ is the matrix exponential of $ A $. Moreover, $ h(W) $ has a simple gradient 
\begin{align}
\label{eq:def:gradh}
\grad h(W) = \big( e^{W \circ W} \big)^T \circ 2W,
\end{align}
and satisfies all of the desiderata (a)-(d).
\end{theorem}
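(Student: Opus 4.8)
The plan is to establish the three assertions in turn: the exact characterization of acyclicity, the closed-form gradient, and then desiderata (a)--(d). First I would note that the Hadamard square $W\circ W$ is entrywise nonnegative and has the same support as $W$, since $[W\circ W]_{ij}=w_{ij}^2\ne 0 \iff w_{ij}\ne 0$; hence $\adj(W\circ W)=\adj(W)$ and $\gr(W\circ W)=\gr(W)$ as directed graphs. As indicated in the paragraph preceding the statement, the proof of Proposition~\ref{prop:matrixexp} applies verbatim to any nonnegative matrix $A$: nonnegativity forces $(A^k)_{ii}\ge 0$ for every $k$, with strict inequality exactly when $\gr(A)$ has a length-$k$ closed walk through $i$, so $\tr e^A - d = \sum_{k\ge 1}\tr(A^k)/k!$ is a sum of nonnegative terms which vanishes iff $\tr(A^k)=0$ for all $k\ge 1$, i.e. iff $\gr(A)$ is acyclic. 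Taking $A=W\circ W$ gives $h(W)=0 \iff \gr(W\circ W)\in\dagspace \iff \gr(W)\in\dagspace$, which is precisely (a).

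For the gradient formula \eqref{eq:def:gradh} I would compute the differential of $h$. Using the power series and cyclicity of the trace, $d\,\tr(M^k) = k\,\tr(M^{k-1}\,dM)$, so $d\,\tr(e^M) = \sum_{k\ge 1}\tr(M^{k-1}\,dM)/(k-1)! = \tr(e^M\,dM)$. Since $d(W\circ W) = 2W\circ dW$, writing $P:=e^{W\circ W}$ we obtain $dh = \tr\!\big(P\,(2W\circ dW)\big)$. Expanding, $\tr\!\big(P(2W\circ dW)\big) = \sum_{i,j}P_{ij}\,(2W\circ dW)_{ji} = \sum_{i,j}2\,P_{ji}\,w_{ij}\,dw_{ij} = \langle P^T\circ 2W,\,dW\rangle$ with $\langle A,B\rangle=\sum_{ij}A_{ij}B_{ij}$; reading off the coefficient of $dw_{ij}$ gives $\grad h(W) = P^T\circ 2W = (e^{W\circ W})^T\circ 2W$. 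I expect this to be the only mildly delicate step, precisely because $e^{M+E}\ne e^M e^E$ when $M,E$ fail to commute, so the differential of $\tr e^M$ cannot be read off by naive factoring --- the trace-cyclicity argument on the series is what makes it work (and note the transpose genuinely matters, as $W\circ W$ need not be symmetric).

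It remains to verify (a)--(d). Item (a) is exactly the characterization proved above. For (b), the expansion $h(W)=\sum_{k\ge 1}\tr\!\big((W\circ W)^k\big)/k!$ displays $h(W)$ as a nonnegative, $k!$-discounted tally over closed walks of the products of the squared edge weights, so it increases as cycles proliferate or become more heavily weighted --- a reasonable numerical proxy for ``DAG-ness.'' For (c), $h$ is the composition of the polynomial map $W\mapsto W\circ W$, the entire map $M\mapsto e^M$, and the affine map $M\mapsto\tr(M)-d$, hence $C^\infty$ (indeed real-analytic). For (d), $e^{W\circ W}$ can be evaluated in $O(d^3)$ time by standard matrix-exponential routines, and then $h(W)$ and $\grad h(W)$ follow immediately from the formulas above at the same cost, sidestepping the ill-conditioned alternative of forming $B,B^2,\dots,B^d$ separately. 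Collecting these observations completes the proof.
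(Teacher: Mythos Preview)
Your proof is correct and follows essentially the same approach as the paper: reduce the characterization \eqref{eq:def:h} to Proposition~\ref{prop:matrixexp} via the observation that $W\circ W$ is a nonnegative matrix with $\gr(W\circ W)=\gr(W)$, and then read off (a)--(d) from the power-series expansion. Your derivation of the gradient via $d\,\tr(e^M)=\tr(e^M\,dM)$ and the Hadamard chain rule is exactly the computation the paper omits but implicitly relies on for \eqref{eq:def:gradh}; the trace-cyclicity trick you highlight is indeed the right way around the noncommutativity of the matrix exponential.
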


The proof of \eqref{eq:def:h} is similar to \eqref{eq:prop:matrixexp}, and desiderata (c)-(d) follow from \eqref{eq:def:gradh}. To see why desiderata (b) holds, note that the proof of Proposition~\ref{prop:traceinv} shows that the power series $\tr(B+B^{2}+\cdots)$ simply counts the number of closed walks in $B$, and the matrix exponential simply re-weights these counts. Replacing $B$ with $W\circ W$ amounts to counting \emph{weighted} closed walks, where the weight of each edge is $w_{ij}^{2}$. Thus, larger $h(W)>h(W')$ means either (a) $W$ has more cycles than $W'$ or (b) The cycles in $W$ are more heavily weighted than in $W'$.

Moreover, notice that $ h(W) \ge 0 $ for all $ W $ since each term in the series is nonnegative. 
This gives another interesting perspective of the space of DAGs as the set of global minima of $ h(W) $.
However, due to the nonconvexity, this is not equivalent to the first order stationary condition $ \nabla h(W) = 0 $.

A key conclusion from Theorem~\ref{thm:ideal} is that $h$ and its gradient only involve evaluating the matrix exponential, which is a well-studied function in numerical analysis, and whose $ O(d^3) $ algorithm~\citep{almohy2009new} is readily available in many scientific computing libraries. 
\update{
Although the connection between trace of matrix power and number of cycles in the graph is well-known~\cite{harary1971number},}
to the best of our knowledge, this characterization of acyclicity has not appeared in the 
\update{DAG learning}
literature previously.
We defer the discussion of other possible characterizations in the appendix. 
In the next section, we apply Theorem~\ref{thm:ideal} to solve the program \eqref{eq:exact:program} to stationarity by treating it as an equality constrained program.

\section{Optimization}
\label{sec:opt}

Theorem~\ref{thm:ideal} establishes a smooth, algebraic characterization of acyclicity that is also computable.
As a consequence, the following equality-constrained program $(\ecp)$ is equivalent to \eqref{eq:exact:program}:
\begin{align}
\label{eq:ideal:program}
(\ecp) \qquad
\begin{aligned}
\min_{W\in\wadj} & \quad \score(W) \phantom{(\ecp) \qquad} \\
\text{subject to} & \quad h(W)=0.
 \end{aligned}
\end{align}
The main advantage of $(\ecp)$ compared to both \eqref{eq:exact:program} and \eqref{eq:main:opt} is its amenability to classical techniques from the mathematical optimization literature. 
Nonetheless, since $\{W: h(W)=0\}$ is a nonconvex constraint, \eqref{eq:ideal:program} is a nonconvex program, hence we still inherit the difficulties associated with nonconvex optimization. In particular, we will be content to find stationary points of \eqref{eq:ideal:program}; 
{in Section~\ref{sec:exp:global} we compare our results to the global minimizer and show that the stationary points found by our method are close to global minima in practice.}

In the follows, we outline the algorithm for solving \eqref{eq:ideal:program}. 
It consists of three steps: (i) converting the \emph{constrained} problem into a sequence of \emph{unconstrained} subproblems, (ii) optimizing the unconstrained subproblems, and (iii) thresholding.
The full algorithm is outlined in Algorithm~\ref{alg:notears}. 

\begin{algorithm}[t]
\caption{$\method$ algorithm}
\label{alg:notears}
\begin{enumerate}
\item Input: 
Initial guess $ (W_{0}, \alpha_0) $, 
progress rate $ c \in (0,1) $, 
tolerance $ \epsilon > 0 $, 
threshold $\omega>0$.
\item For $ t = 0, 1, 2, \dotsc$:
    \begin{enumerate}
    \item Solve primal $ W_{t+1} \gets \argmin_W L^\rho (W, \alpha_t) $ with $ \rho $ such that $ h(W_{t+1}) < c h(W_{t}) $.
    \item Dual ascent $ \alpha_{t+1} \gets \alpha_{t} + \rho h(W_{t+1}) $.
    \item If $  h(W_{t+1})  < \epsilon  $, set $\estecp=W_{t+1}$ and break.
    \end{enumerate}
\item Return the thresholded matrix $\est:=\estecp \circ 1 (|\estecp| > \omega) $.
\end{enumerate}
\end{algorithm}

\subsection{Solving the ECP with augmented Lagrangian}
\label{sec:opt:ecp}

We will use the augmented Lagrangian method \citep[e.g.][]{nemirovski1999optimization} to solve $(\ecp)$, which solves the original problem augmented by a quadratic penalty:
\begin{align}
\label{eqn:augmented-lagrangian}
\begin{aligned}
\min_{W\in\wadj} & \quad \score(W) + \frac{\rho}{2} |h(W)|^2 \\
\text{subject to} & \quad h(W) = 0
\end{aligned}
\end{align}
with a penalty parameter $ \rho > 0 $. 
A nice property of the augmented Lagrangian method is that it approximates well the solution of a \emph{constrained} problem by the solution of  \emph{unconstrained} problems \emph{without} increasing the penalty parameter $ \rho $ to infinity~\citep{nemirovski1999optimization}. 
The algorithm is essentially a dual ascent method for \eqref{eqn:augmented-lagrangian}. 
To begin with, 
the dual function with Lagrange multiplier $ \alpha $ is given by
\begin{gather}
\label{eqn:min-aug-primal}
D(\alpha) = \min_{W \in \R^{d \times d}} L^\rho (W, \alpha), \\ 
\text{where}  \quad  L^\rho (W, \alpha)  = \score(W)  + \frac{\rho}{2} |h(W)|^2 + \alpha h(W)
\end{gather}
is the augmented Lagrangian. 
The goal is to find a local solution to the dual problem
\begin{align}\label{eqn:dual}
\max_{\alpha \in \R} \ \ D(\alpha).
\end{align}
Let $ W_\alpha^\star $ be the local minimizer of the Lagrangian \eqref{eqn:min-aug-primal} at $ \alpha $, i.e. $ D(\alpha) = L^\rho(W_\alpha^\star, \alpha) $.
Since the dual objective $ D(\alpha) $ is linear in $ \alpha $, the derivative is simply given by $\grad D(\alpha) = h(W_\alpha^\star)$.
Therefore one can perform dual gradient ascent to optimize \eqref{eqn:dual}:
\begin{align}\label{eqn:dual-ascent}
\alpha \gets \alpha + \rho h( W_\alpha^\star ),
\end{align}
where the choice of step size $ \rho $ comes with the following convergence rate:
\begin{proposition}[Corollary 11.2.1, \citealp{nemirovski1999optimization}]
For $ \rho $ large enough and the starting point $ \alpha_0 $ near the solution $ \alpha^\star $, the update \eqref{eqn:dual-ascent} converges to $ \alpha^\star $ linearly. 
\end{proposition}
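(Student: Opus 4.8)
The statement is the classical local convergence guarantee for the method of multipliers, and the plan is to view the dual-ascent step \eqref{eqn:dual-ascent} as a fixed-point iteration $\alpha_{t+1} = G_\rho(\alpha_t)$ for the scalar map $G_\rho(\alpha) := \alpha + \rho\, h(W_\alpha^\star)$, and to show that for $\rho$ large $G_\rho$ is a contraction on a neighborhood of the dual solution $\alpha^\star$. Linear convergence from any $\alpha_0$ in that neighborhood is then immediate from the Banach fixed-point theorem, or equivalently from the standard fact that a $C^1$ scalar iteration with $|G_\rho'(\alpha^\star)| < 1$ converges linearly. Identifying $\wadj$ with $\R^{d^2}$ via $\vect$, let $(W^\star,\alpha^\star)$ be a primal--dual pair for \eqref{eq:ideal:program} at which the second-order sufficient conditions hold: $\grad\score(W^\star) + \alpha^\star\grad h(W^\star) = 0$, $h(W^\star) = 0$, the regularity condition $\grad h(W^\star)\neq 0$, and positive definiteness of $M := \hess\score(W^\star) + \alpha^\star\hess h(W^\star)$ restricted to $\{v : g^T v = 0\}$, where $g := \vect\big(\grad h(W^\star)\big)$. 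All of these objects exist because $h$ is smooth by Theorem~\ref{thm:ideal}.

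The first step is to show that for $\rho$ above some threshold $\bar\rho$ the unconstrained subproblem \eqref{eqn:min-aug-primal} has a local minimizer $W_\alpha^\star$ that is a $C^1$ function of $\alpha$ near $\alpha^\star$ with $W_{\alpha^\star}^\star = W^\star$. The Hessian of the augmented Lagrangian is
\begin{align*}
\hess_{WW} L^\rho(W,\alpha) = \hess\score(W) + \big(\alpha + \rho\, h(W)\big)\hess h(W) + \rho\, \grad h(W)\,\grad h(W)^T ,
\end{align*}
which at $(W^\star,\alpha^\star)$ equals $M + \rho\, g g^T$; by the classical Debreu-type lemma (a symmetric matrix positive definite on $\ker g^T$ becomes positive definite once $\rho g g^T$ is added with $\rho$ large), this is positive definite for $\rho > \bar\rho$. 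Hence $\grad_W L^\rho(W^\star,\alpha^\star) = 0$ with nonsingular Hessian, so the implicit function theorem applied to $\grad_W L^\rho(W,\alpha) = 0$ produces a $C^1$ branch $\alpha \mapsto W_\alpha^\star$, which is a strict local minimizer of $L^\rho(\cdot,\alpha)$ because the Hessian remains positive definite by continuity. The second step differentiates this identity: from $\grad_W L^\rho(W_\alpha^\star,\alpha) \equiv 0$ one gets $\tfrac{dW_\alpha^\star}{d\alpha} = -\big[\hess_{WW} L^\rho(W_\alpha^\star,\alpha)\big]^{-1}\grad h(W_\alpha^\star)$, and since $\grad D(\alpha) = h(W_\alpha^\star)$ as already noted, the chain rule gives
\begin{align*}
G_\rho'(\alpha) = 1 + \rho\, g_\alpha^T \frac{dW_\alpha^\star}{d\alpha} = 1 - \rho\, g_\alpha^T M_{\rho,\alpha}^{-1} g_\alpha ,
\end{align*}
with $g_\alpha := \vect(\grad h(W_\alpha^\star))$ and $M_{\rho,\alpha} := \hess_{WW} L^\rho(W_\alpha^\star,\alpha)$. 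Evaluating at $\alpha^\star$, writing $M_\rho = M + \rho g g^T$ and applying the Sherman--Morrison identity (after splitting $\R^{d^2}$ along $g$ and $g^\perp$, since $M$ is only positive definite on a hyperplane), one finds $\rho\, g^T M_\rho^{-1} g = 1 + O(1/\rho)$, so $|G_\rho'(\alpha^\star)| = O(1/\rho)$; in particular $|G_\rho'(\alpha^\star)| < 1$ once $\rho$ is large enough.

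The third step is routine: by continuity of $G_\rho'$ there is a neighborhood $N$ of $\alpha^\star$ and $\kappa\in(0,1)$ with $|G_\rho'|\le\kappa$ on $N$, so $G_\rho$ maps $N$ into itself as a $\kappa$-contraction, and for any $\alpha_0 \in N$ the iterates satisfy $|\alpha_{t+1} - \alpha^\star| \le \kappa\,|\alpha_t - \alpha^\star|$, giving linear convergence to the unique fixed point $\alpha^\star$. I expect the main obstacle to be the first step: one must be careful to show that the inner problem \eqref{eqn:min-aug-primal} actually has a well-defined, smoothly varying local minimizer --- not merely a stationary point --- once $\rho$ is large, which is exactly where the hypothesis ``$\rho$ large enough'' is consumed (through the threshold $\bar\rho$ of the Debreu lemma) and where the interaction with the implicit function theorem needs care. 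The Sherman--Morrison computation in the second step is elementary modulo the bookkeeping caused by $M$ being positive definite only on $\ker g^T$, and the contraction argument in the third step is entirely standard; this reproduces the cited Corollary~11.2.1.
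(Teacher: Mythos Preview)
The paper does not supply its own proof of this proposition; it is simply quoted from \citet{nemirovski1999optimization} without argument. Your proposal is correct and is essentially the standard textbook proof of that corollary (the classical local linear convergence of the method of multipliers): cast the dual update as a fixed-point iteration, invoke second-order sufficient conditions together with the Debreu lemma to obtain positive definiteness of $\hess_{WW} L^\rho(W^\star,\alpha^\star)$ for large $\rho$, use the implicit function theorem to get a $C^1$ branch $\alpha\mapsto W_\alpha^\star$, and then compute $G_\rho'(\alpha^\star)=1-\rho\,g^T(M+\rho gg^T)^{-1}g=O(1/\rho)$ via a rank-one inverse formula. There is nothing to compare against in the paper itself.

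One point worth flagging in your write-up: your argument differentiates $\score$ twice, so it tacitly assumes $\score$ is $C^2$. In the paper $\score(W)=\loss(W;\dat)+\lambda\|W\|_1$, hence the proof as written---and the cited corollary---applies directly only in the $\lambda=0$ case. For $\lambda>0$ the inner problem is a nonsmooth composite and the paper solves it by PQN rather than plain L-BFGS; extending the outer linear-convergence guarantee to that setting would require an additional argument that neither you nor the paper supplies. This is not a defect in your proof of the stated proposition, but it is a gap between the proposition and how it is being invoked.
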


In our experiments, typically fewer than 10 steps of the augmented Lagrangian scheme are required.

\subsection{Solving the unconstrained subproblem}
\label{sec:opt:inner}

\newcommand{\dvec}{\mathbf{\boldsymbol{d}}}
\newcommand{\wvec}{\mathbf{\boldsymbol{w}}}
\newcommand{\gvec}{\mathbf{\boldsymbol{g}}}
\newcommand{\svec}{\mathbf{\boldsymbol{s}}}
\newcommand{\yvec}{\mathbf{\boldsymbol{y}}}

The augmented Lagrangian converts a \emph{constrained} problem \eqref{eqn:augmented-lagrangian} into a sequence of \emph{unconstrained} problems~\eqref{eqn:min-aug-primal}. We now discuss how to solve these subproblems efficiently. Let $ \wvec = \vect (W) \in \R^p $, with $ p = d^2 $. 
The unconstrained subproblem~\eqref{eqn:min-aug-primal} can be considered as a typical minimization problem over real vectors:
\begin{gather}
\min_{\wvec \in \R^{p}} f(\wvec) + \lambda\norm{\wvec}_{1}, \\
\text{where} \quad f(\wvec) = \loss(W;\dat)  + \frac{\rho}{2} |h(W)|^2 + \alpha h(W)
\end{gather}
is the smooth part of the objective. 
Our goal is to solve the above problem to high accuracy so that $ h(W) $ can be sufficiently suppressed.

In the special case of $ \lambda = 0 $, the nonsmooth term vanishes and the problem simply becomes an unconstrained smooth minimization, for which 
a number of efficient numerical algorithms are available, for instance the L-BFGS~\citep{byrd1995limited}. 
To handle the nonconvexity, a slight modification~\citep[Procedure 18.2]{nocedal2006numerical} needs to be applied.

When $ \lambda > 0 $, the problem becomes composite minimization, which can also be efficiently solved by the proximal quasi-Newton (PQN) method~\citep{zhong2014}. 
At each step $ k $, the key idea is to find the descent direction through a quadratic approximation of the smooth term:
\begin{align}\label{eqn:descent-direction-subproblem}
\dvec_k = \argmin_{\dvec \in \R^{p}} \ \gvec_k^T \dvec + \frac{1}{2} \dvec^T B_k \dvec + \lambda \norm{\wvec_k + \dvec}_1,
\end{align}
where $ \gvec_k  $ is the gradient of $ f(\wvec) $ and $ B_k $  is the L-BFGS approximation of the Hessian.
Note that for each coordinate $ j $, problem \eqref{eqn:descent-direction-subproblem} has a closed form update $ \dvec \gets \dvec + z^\star e_j $ given by
\begin{align}\label{eqn:coordinate-update}
z^\star
 = \argmin_z \ \frac{1}{2} \underbrace{B_{jj}}_{a} z^2 + (\underbrace{\gvec_j + (B\dvec)_j }_{b}) z + \lambda | \underbrace{\wvec_j + \dvec_j}_{c} + z| 
 = - c + S \left( c - \frac{b}{a}, \frac{\lambda}{a} \right). 
\end{align}
Moreover, the low-rank structure of  $ B_k $  enables fast computation for coordinate update. 
As we describe in Appendix~\ref{app:pqncd}, the precomputation time is only $ O(m^2 p + m^3) $ where $ m \ll p $ is the memory size of L-BFGS, and each coordinate update is $ O(m) $. 
Furthermore, since we are using sparsity regularization, we can further speed up the algorithm by aggressively shrinking the active set of coordinates based on their subgradients~\citep{zhong2014}, and exclude the remaining dimensions from being updated. With the updates restricted to the active set $ \mathcal{S} $, all dependencies of the complexity on $ O(p) $ becomes $ O(|\mathcal{S}| ) $, which is substantially smaller. 
Hence the overall complexity of L-BFGS update is $ O(m^2 |\mathcal{S}| + m^3 + m |\mathcal{S}| T) $, where $ T $ is the number of inner iterations, typically $ T=10 $.

\subsection{Thresholding}
\label{sec:opt:feasibility}

In regression problems, it is known that post-processing estimates of coefficients via hard thresholding provably reduces the number of false discoveries \citep{zhou2009,wang2016}. Motivated by these encouraging results, we threshold the edge weights as follows: After obtaining a stationary point $\estecp$ of \eqref{eqn:augmented-lagrangian}, given a fixed threshold $\omega>0$, set any weights smaller than $\omega$ in absolute value to zero.
This strategy also has the important effect of ``rounding''  the numerical solution of the augmented Lagrangian \eqref{eqn:augmented-lagrangian}, since due to numerical precisions the solution satisfies $ h(\estecp) \le \epsilon $ for some small tolerance $ \epsilon $ near machine precision (e.g. $\epsilon=10^{-8}$), rather than $h(\estecp)=0$ strictly. 
However,
since $ h(\estecp) $ explicitly \emph{quantifies} the ``DAG-ness'' of $\estecp$ (see desiderata (b), Section~\ref{sec:acyclicity}), a small threshold $ \omega $ suffices to rule out cycle-inducing edges.

\section{Experiments}
\label{sec:exp}

We compared our method against greedy equivalent search (GES) \citep{chickering2003,ramsey2016}, the PC algorithm \citep{spirtes2000}, and LiNGAM \citep{shimizu2006}. \update{For GES, we used the fast greedy search (FGS) implementation from \citet{ramsey2016}.} Since the accuracy of PC and LiNGAM was significantly lower than either FGS or $\method$, we only report the results against FGS here. This is consistent with previous work on score-based learning \citep{aragam2015}, which also indicates that FGS outperforms other techniques such as hill-climbing and MMHC \citep{tsamardinos2006}. FGS was chosen since it is a state-of-the-art algorithm that scales to large problems.

For brevity, we outline the basic set-up of our experiments here; precise details of our experimental set-up, including all parameter choices and more detailed evaluations, can be found in Appendix~\ref{app:exp}. In each experiment, a random graph $\gr$ was generated from one of two random graph models, Erd\"os-R\'enyi (ER) or scale-free (SF). Given $\gr$, we assigned uniformly random edge weights to obtain a weight matrix $\true$. Given $\true$, we sampled $X=\true^{T}X+z\in\R^{d}$ from three different noise models: Gaussian ($\Gauss$), Exponential ($\Exp$), and Gumbel ($\Gumbel$). Based on these models, we generated random datasets $\dat\in\R^{n\times d}$ by generating rows i.i.d. according to one of these three models with $d\in\{10,20,50,100\}$ and $n\in\{20, 1000\}$. Since FGS outputs a CPDAG instead of a DAG or weight matrix, some care needs to be taken in making comparisons; see Appendix~\ref{app:exp:details} for details.

\subsection{Parameter estimation}
\label{sec:exp:est}

\begin{figure}[t]
\centering
\begin{subfigure}[t]{0.138\textwidth}
\centering
\includegraphics[width=0.99\textwidth]{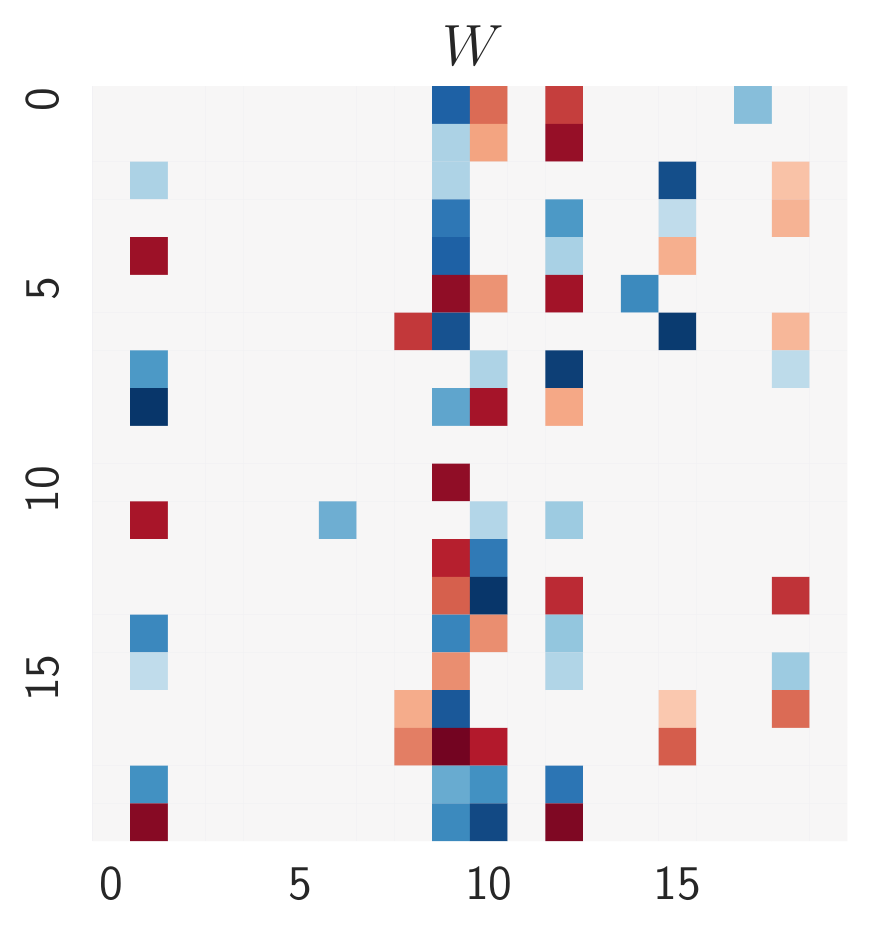}
\caption{true graph}
\end{subfigure}%
~
\begin{subfigure}[t]{0.43\textwidth}
\centering
\includegraphics[width=0.99\textwidth]{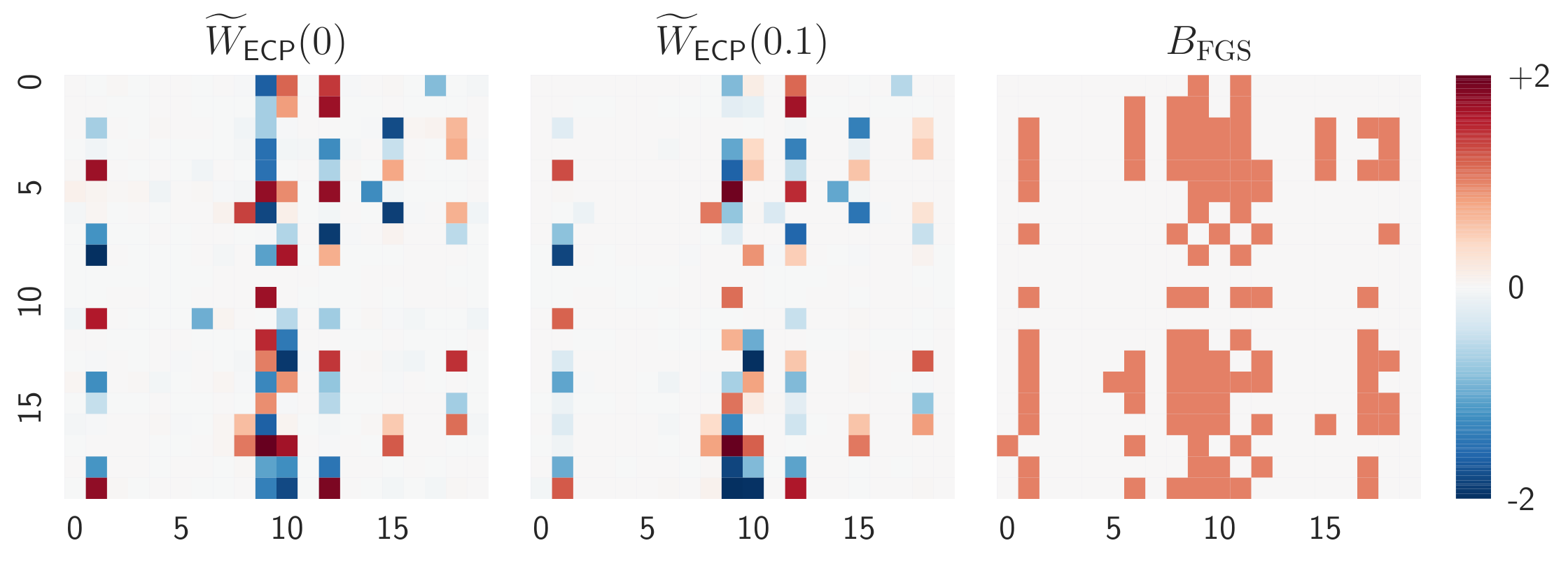}
\caption{estimate with $n=1000$}
\end{subfigure}%
\begin{subfigure}[t]{0.43\textwidth}
\centering
\includegraphics[width=0.99\textwidth]{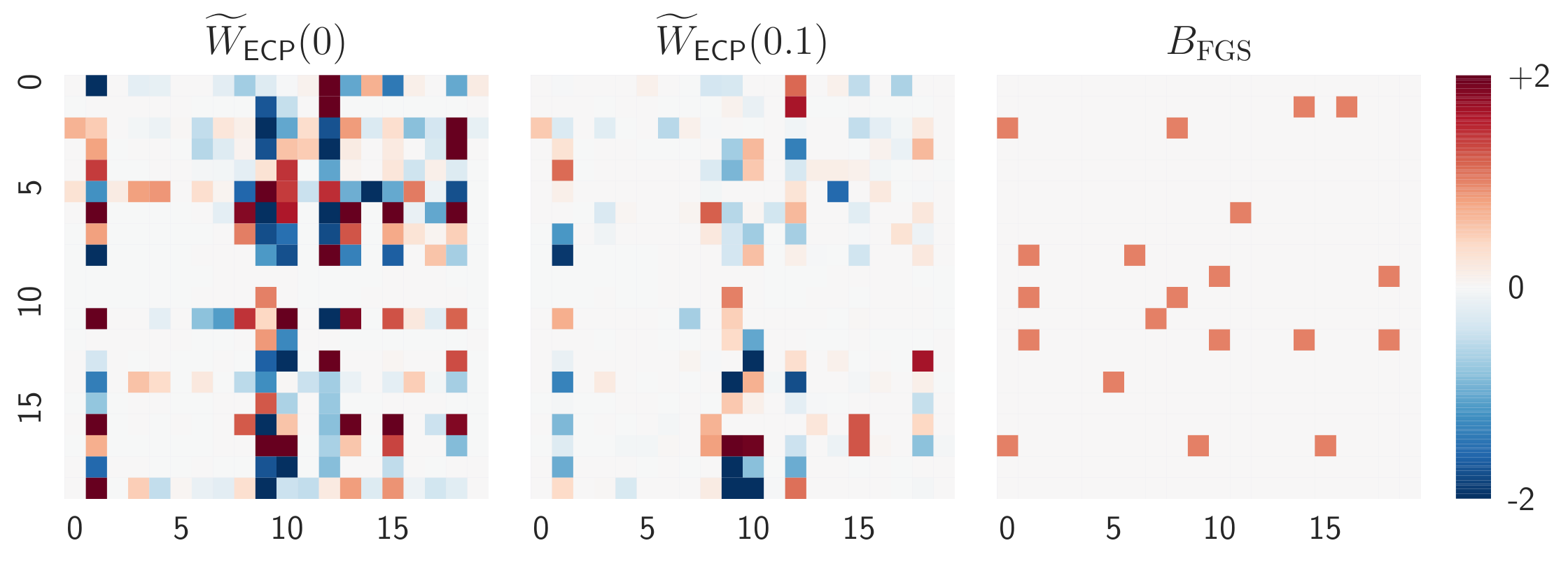}
\caption{estimate with $ n=20 $}
\end{subfigure}
\caption{Parameter estimates of $\estecp$ on a scale-free graph. Without the additional thresholding step in Algorithm~\ref{alg:notears}, $\method$ still produces consistent estimates of the true graph. The proposed method estimates the weights very well with large samples even without regularization, and remains accurate on insufficient samples when $\ell_{1}$-regularization is introduced. See also Figure~\ref{fig:compare:heatmap:er2}.}
\label{fig:compare:heatmap:sf4}
\end{figure}

We first performed a qualitative study of the solutions obtained by $\method$ \emph{without thresholding} by visualizing the weight matrix $\estecp$ obtained by solving $(\ecp)$ (i.e. $\omega=0$). 
This is illustrated in Figures~\ref{fig:compare:heatmap:er2} (ER-2) and~\ref{fig:compare:heatmap:sf4} (SF-4). 
The key takeaway is that our method provides (empirically) consistent parameter estimates of the true weight matrix $\true$. The final thresholding step in Algorithm~\ref{alg:notears} is only needed to ensure accuracy in structure learning. 
It also shows how effective is $ \ell_1 $-regularization in small $ n $ regime.

\subsection{Structure learning}
\label{sec:exp:struct}

We now examine our method for structure recovery, which is shown in Figure~\ref{fig:compare:shd}. 
For brevity, we only report the numbers for the structural Hamming distance (SHD) here, but complete figures and tables for additional metrics can be found in the supplement. 
Consistent with previous work on greedy methods, FGS is very competitive when
the number of edges is small (ER-2), but rapidly deterioriates
for even modest numbers of edges (SF-4). 
In the latter regime, $\method$ shows significant improvements.
This is consistent across each metric we evaluated, and the difference grows as the number of nodes $ d $ gets larger. 
Also notice that our algorithm performs uniformly better for each noise model ($\Exp$, $\Gauss$, and $\Gumbel$), without leveraging any specific knowledge about the noise type. 
Again, $ \ell_1 $-regularizer helps significantly in the small $ n $ setting.

\begin{figure}[t]
\centering
\begin{subfigure}[t]{0.48\textwidth}
\centering
\includegraphics[width=0.99\textwidth]{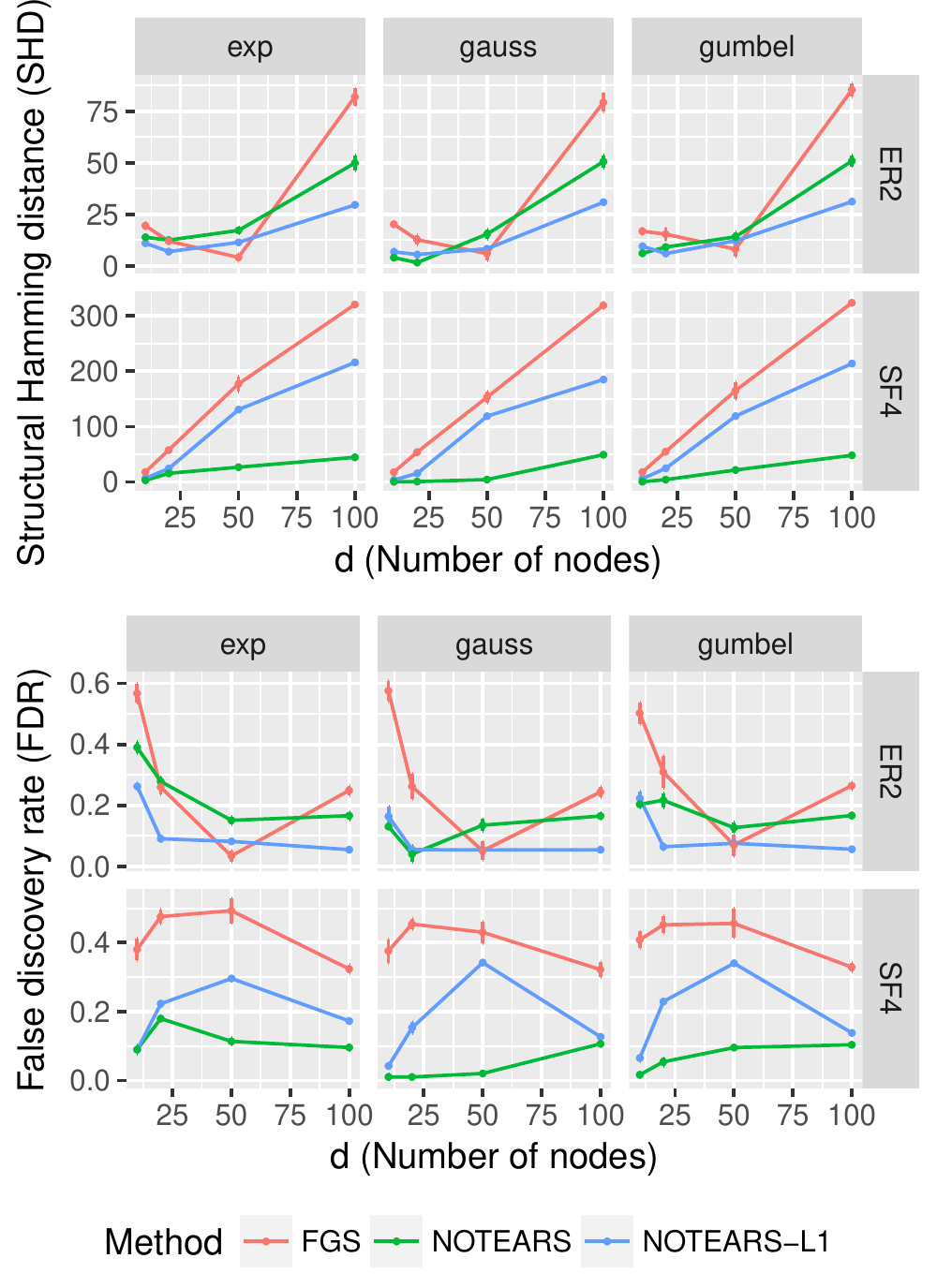}
\caption{SHD with $n=1000$}
\end{subfigure}%
\hspace{1em}
\begin{subfigure}[t]{0.48\textwidth}
\centering
\includegraphics[width=0.99\textwidth]{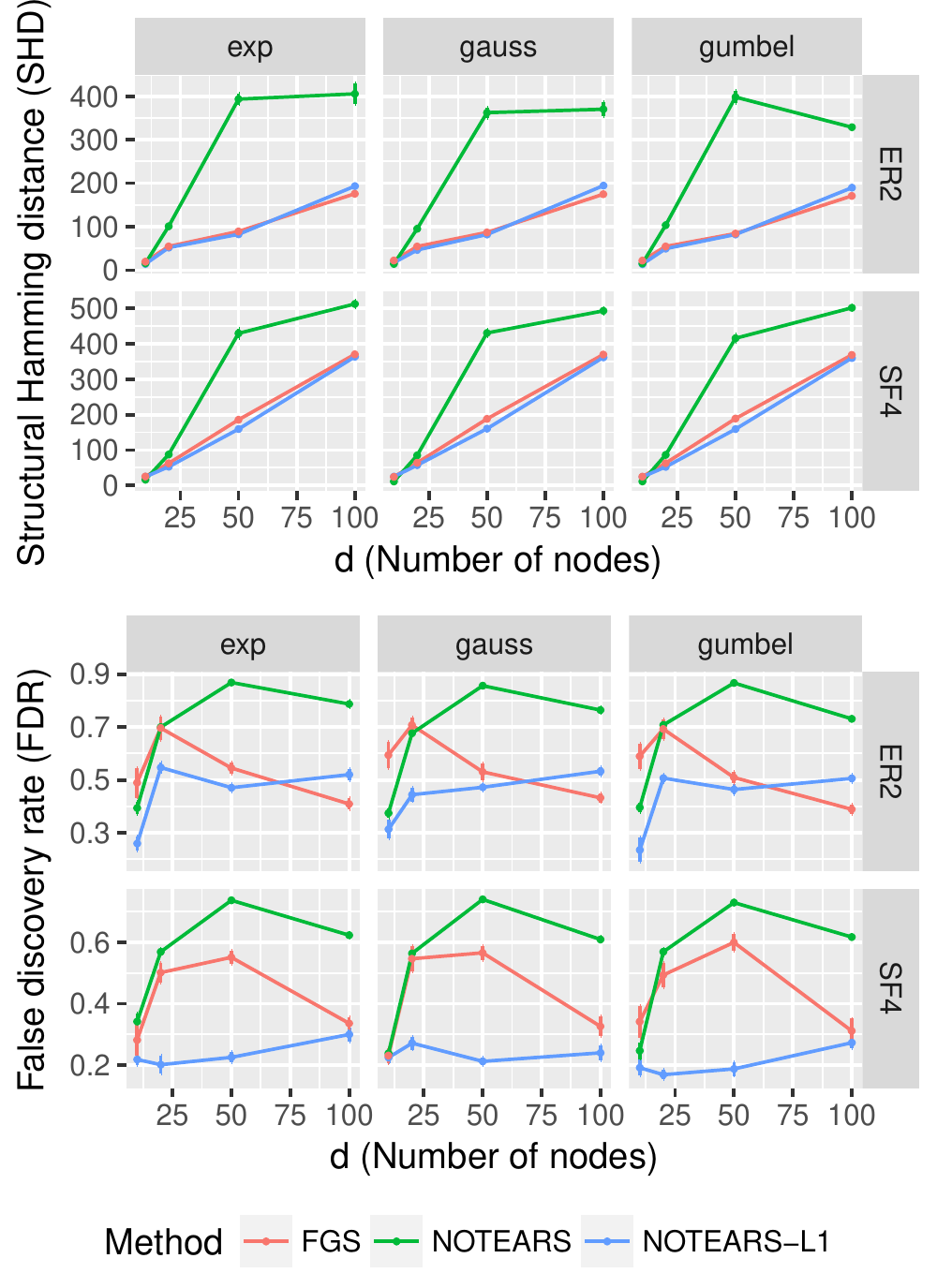}
\caption{SHD with $ n=20 $}
\end{subfigure}
\caption{Structure recovery in terms of SHD and FDR to the true graph (lower is better). 
Rows: random graph types, \{ER,SF\}-$ k $ = \{Erd\"os-R\'enyi, scale-free\} graphs with $ kd $ expected edges.
Columns: noise types of SEM. Error bars represent standard errors over 10 simulations.}
\label{fig:compare:shd}
\end{figure}

\subsection{Comparison to exact global minimizer}
\label{sec:exp:global}

In order to assess the ability of our method to solve the original program given by \eqref{eq:exact:program}, we used the GOBNILP program \citep{cussens2012,cussens2017} to find the exact minimizer of \eqref{eq:exact:program}. Since this involves enumerating all possible parent sets for each node, these experiments are limited to small DAGs. Nonetheless, these small-scale experiments yield valuable insight into how well $\method$ performs in actually solving the original problem. In our experiments we generated random graphs with $d=10$, and then generated 10 simulated datasets containing $n=20$ samples (for high-dimensions) and $n=1000$ (for low-dimensions). We then compared the scores returned by our method to the exact global minimizer computed by GOBNILP along with the estimated parameters. The results are shown in Table~\ref{tab:exact:er2}. Surprisingly, although $\method$ is only guaranteed to return a local minimizer, in many cases the obtained solution is very close to the global minimizer, as evidenced by deviations $\norm{\est-W_{\gr}}$. Since the general structure learning problem is NP-hard, we suspect that although the models we have tested (i.e. ER and SF) appear amenable to fast solution, in the worst-case there are graphs which will still take exponential time to run or get stuck in a local minimum. Furthermore, the problem becomes more difficult as $d$ increases. Nonetheless, this is encouraging evidence that the nonconvexity of \eqref{eq:ideal:program} is a minor issue in practice. We leave it to future work to investigate these problems further.

\begin{table}[t]
\centering
\caption{Comparison of $\method$ vs. globally optimal solution. $\Delta(\estgob,\est)=\score(\estgob) - \score(\est)$.}
\addtolength{\tabcolsep}{-2.5pt} 
\begin{tabular}{rrrrrrrrrr}
\toprule
$n$ & $\lambda$ & Graph & $\score(\true)$ & $\score(\estgob)$ & $\score(\est)$ & $\score(\estecp)$ & $\Delta(\estgob,\est)$ & $\norm{\est-W_{\gr}}$ & $\norm{\true-W_{\gr}}$ \\ 
\midrule
20 & 0 & ER2 & 5.11 & 3.85 & 5.36 & 3.88 & -1.52 & 0.07 & 3.38 \\ 
20 & 0.5 & ER2 & 16.04 & 12.81 & 13.49 & 12.90 & -0.68 & 0.12 & 3.15 \\ 
1000 & 0 & ER2 & 4.99 & 4.97 & 5.02 & 4.95 & -0.05 & 0.02 & 0.40 \\ 
1000 & 0.5 & ER2 & 15.93 & 13.32 & 14.03 & 13.46 & -0.71 & 0.12 & 2.95 \\ 
\midrule
20 & 0 & SF4 & 4.99 & 3.77 & 4.70 & 3.85 & -0.93 & 0.08 & 3.31 \\ 
20 & 0.5 & SF4 & 23.33 & 16.19 & 17.31 & 16.69 & -1.12 & 0.15 & 5.08 \\ 
1000 & 0 & SF4 & 4.96 & 4.94 & 5.05 & 4.99 & -0.11 & 0.04 & 0.29 \\ 
1000 & 0.5 & SF4 & 23.29 & 17.56 & 19.70 & 18.43 & -2.13 & 0.13 & 4.34 \\ 
\bottomrule
\end{tabular}
\addtolength{\tabcolsep}{2.5pt}
\label{tab:exact:er2}
\end{table}

\subsection{Real-data}
\label{sec:exp:real}

We also compared FGS and $\method$ on a real dataset provided by \citet{sachs2005}. This dataset consists of continuous measurements of expression levels of proteins and phospholipids in human immune system cells ($n = 7466$ $d = 11$, 20 edges). This dataset is a common benchmark in graphical models since it comes with a known \emph{consensus network}, that is, a gold standard network based on experimental annotations that is widely accepted by the biological community. In our experiments, FGS estimated 17 total edges with an SHD of 22, compared to 16 for $\method$ with an SHD of 22. 

\section{Discussion}
\label{sec:discussion}

\update{We have proposed a new method for learning DAGs from data based on a continuous optimization program. This represents a significant departure from existing approaches that search over the discrete space of DAGs, resulting in a difficult optimization program. We also proposed two optimization schemes for solving the resulting program to stationarity, and illustrated its advantages over existing methods such as greedy equivalence search. Crucially, by performing global updates (e.g. all parameters at once) instead of local updates (e.g. one edge at a time) in each iteration, our method is able to avoid relying on assumptions about the local structure of the graph. To conclude, let us discuss some of the limitations of our method and possible directions for future work.}

First, it is worth emphasizing once more that the equality constrained program \eqref{eq:ideal:program} is a nonconvex program. Thus, although we overcome the difficulties of \emph{combinatorial} optimization, our formulation still inherits the difficulties associated with \emph{nonconvex} optimization. In particular, black-box solvers can at best find stationary points of \eqref{eq:ideal:program}. With the exception of exact methods, however, existing methods suffer from this drawback as well.\footnote{GES \citep{chickering2003} is known to find the global minimizer in the limit $n\to\infty$ under certain assumptions, but this is not guaranteed for finite samples.} The main advantage of $\method$ then is \emph{smooth}, \emph{global search}, as opposed to combinatorial, local search; and furthermore the search is delegated to standard numerical solvers. 

\update{
Second, 
the current work relies on the smoothness of the score function, in order to make use of gradient-based numerical solvers to guide the graph search. 
However it is also interesting to consider non-smooth, even discrete scores such as BDe~\citep{heckerman1995}. 
Off-the-shelf techniques such as Nesterov's smoothing~\citep{nesterov2005smooth} could be useful, however more thorough investigation is left for future work.
}

Third, since the evaluation of the matrix exponential is $O(d^{3})$, the computational complexity of our method is cubic in the number of nodes, although the constant is small for sparse matrices. In fact, this is one of the key motivations for our use of second-order methods (as opposed to first-order), i.e. to reduce the number of matrix exponential computations. By using second-order methods, each iteration make significantly more progress than first-order methods. Furthermore, although in practice not many iterations ($t\sim10$) are required, we have not established any worst-case iteration complexity results. In light of the results in Section~\ref{sec:exp:global}, we expect there are exceptional cases where convergence is slow. Notwithstanding, $\method$ already outperforms existing methods when the in-degree is large, which is known difficult spot for existing methods. We leave it to future work to study these cases in more depth.

Lastly, in our experiments, we chose a fixed, suboptimal value of $\omega>0$ for thresholding (Section~\ref{sec:opt:feasibility}). Clearly, it would be preferable to find a data-driven choice of $\omega$ that adapts to different noise-to-signal ratios and graph types. It is an intersting direction for future to study such choices.

\update{The code is publicly available at \texttt{\url{https://github.com/xunzheng/notears}}.}

\bibliographystyle{abbrvnat}
\bibliography{dagpqnbib,zxbib}

\newpage

\appendix

\section{Details of Proximal Quasi-Newton}\label{app:pqncd}

Recall $ B_k \in \R^{p \times p} $ is the low-rank approximation of the Hessian matrix given by L-BFGS updates.
Let the memory size of L-BFGS be $ m $, which is taken to be $ m \ll p $. 
The compact form of L-BFGS update can be written as 
\begin{align}
B_k = \gamma_k I - Q \widehat{Q},
\end{align}
where 
\begin{gather}
Q = \begin{bmatrix}
\gamma_k S_k & Y_k 
\end{bmatrix}, \ 
R = \begin{bmatrix}
\gamma_k S_k^T  S_k & L_k \\
L_k^T  & - D_k 
\end{bmatrix}^{-1}, \ 
\widehat{Q} = R Q^T,  \nonumber \\ 
S_k = \begin{bmatrix}
\svec_{k-m} & \cdots & \svec_{k-1}
\end{bmatrix}, \ 
Y_k = \begin{bmatrix}
\yvec_{k-m} & \cdots & \yvec_{k-1}
\end{bmatrix},   \nonumber \\
\svec_k  = \wvec_{k+1} - \wvec_k,  \
\yvec_k = \gvec_{k+1} - \gvec_k,   \
\gamma_{k} = {\yvec_{k-1}^T \yvec_{k-1}} / {\svec_{k-1}^T \yvec_{k-1}}, \nonumber \\
D_k = \text{diag} \begin{bmatrix}
\svec_{k-m}^T \yvec_{k-m} & \cdots & \svec_{k-1}^T \yvec_{k-1}
\end{bmatrix}, \
(L_k)_{ij} = \begin{cases}
\svec_{k-m+i-1}^T \yvec_{k-m+j-1}  & \text{if } i > j \\
0 & \text{otherwise}
\end{cases}.  \nonumber 
\end{gather}

The low rank structure of $ B_k $ enables fast computation of subsequent coordinate descent procedure. 
Specifically, notice that all $ Q, R, \widehat{Q} $, and $ \mathrm{diag}(B) $ can be precomputed in $ O(m^2p + m^3) $ time, which is significantly smaller than naive Hessian inversion $ O(p^3) $. 
After precomputation, in each coordinate update, both $ a $ and $ c $ in \eqref{eqn:coordinate-update} can be computed and updated in $ O(1) $ time. 
Moreover, let $ \widehat{\dvec} = \widehat{Q} \dvec \in \R^{2m} $,  
we have $ (B \dvec)_j = \gamma \dvec_j - Q_{j,:} \widehat{\dvec} $, which suggests $ b $ in \eqref{eqn:coordinate-update} only requires $ O(m) $ to compute and update. 
Therefore each coordinate update is $ O(m) $. 

The detailed procedure of PQN is outlined in Algorithm~\ref{alg:pqncd}. 

\begin{algorithm}[t]
\caption{Proximal Quasi-Newton for unconstrained problem~\citep{zhong2014}}
\label{alg:pqncd}
\begin{enumerate}
\item Input: $ \wvec_0 $, $ \gvec_0 = \nabla f(\wvec_0) $, active set $ \mathcal{S} = [p] $.
\item For $ k = 0, 1, 2, \dotsc$:
    \begin{enumerate}
    \item Shrink $ \mathcal{S} $ to rule out $ j $ with $ w_j = 0 $ or small subgradient $ |\partial_j L(\wvec)| $ 
    \item If shrinking stopping criteria is satisfied
        \begin{enumerate}
        \item Reset $ \mathcal{S} = [p] $ and L-BFGS memory
        \item Update shrinking stopping criteria and continue
        \end{enumerate}
    \item Solve \eqref{eqn:descent-direction-subproblem} for descent direction $ \dvec_k $ using coordinate update~\eqref{eqn:coordinate-update} on active set
    \item Line search for step size $ \eta \in (0,1] $ until Armijo rule is satisfied:
    \begin{align}
    f(\wvec_k + \eta \dvec_k) \le f(\wvec_k) + \eta c_1 (\lambda \norm{\wvec_{k} + \dvec_k}_1  - \lambda \norm{\wvec_{k}} + \gvec_k^T\dvec_k ), 
    \end{align}
    where $ c_1 $ is some small constant, typically set to $ 10^{-3} $ or $ 10^{-4} $.
    \item Generate new iterate $ \wvec_{k+1} \gets \wvec_{k} + \eta \dvec_k $
    \item Update $ \gvec, \svec, \yvec, Q, R, \widehat{Q} $ restricted to $ \mathcal{S} $
    \end{enumerate}
\end{enumerate}
\end{algorithm}

\section{Sensitivity of threshold}\label{app:sensitivity-of-threshold}

We demonstrate the effect of threshold in Figure~\ref{fig:compare:roc}. 
For each setting, we computed the ``ROC'' curve for FDR and TPR with varying level of threshold, while ensuring the resulting graph is indeed a DAG. 
On the right, we also present the estimated edge weights of $ \estecp $ in decreasing order. 
One can first observe that in all cases most of the edge weights are equal or close to zero as expected. 
The remaining question is how to choose a threshold that separates out these (near zero) from signals (away from zero) so that best performance can be achieved. 
With enough samples, one can often notice a sudden change in the weight distribution as in  Figure~\ref{fig:compare:roc}(a)(c). 
With insufficient samples, the breakpoint is less clear, and the optimal choice that balances between TPR and FDR is depends on the specific settings. 
Nonetheless, the predictive performance is less sensitive to threshold value as one can see from the slope of the decrease in the weights before getting close to zero. 
Indeed, in our experiments, we found a fixed threshold $ \omega = 0.3 $ is a suboptimal yet reasonable choice across many different settings.

\begin{figure}[t]
\centering
\begin{subfigure}[t]{0.49\textwidth}
\centering
\includegraphics[width=0.48\textwidth]{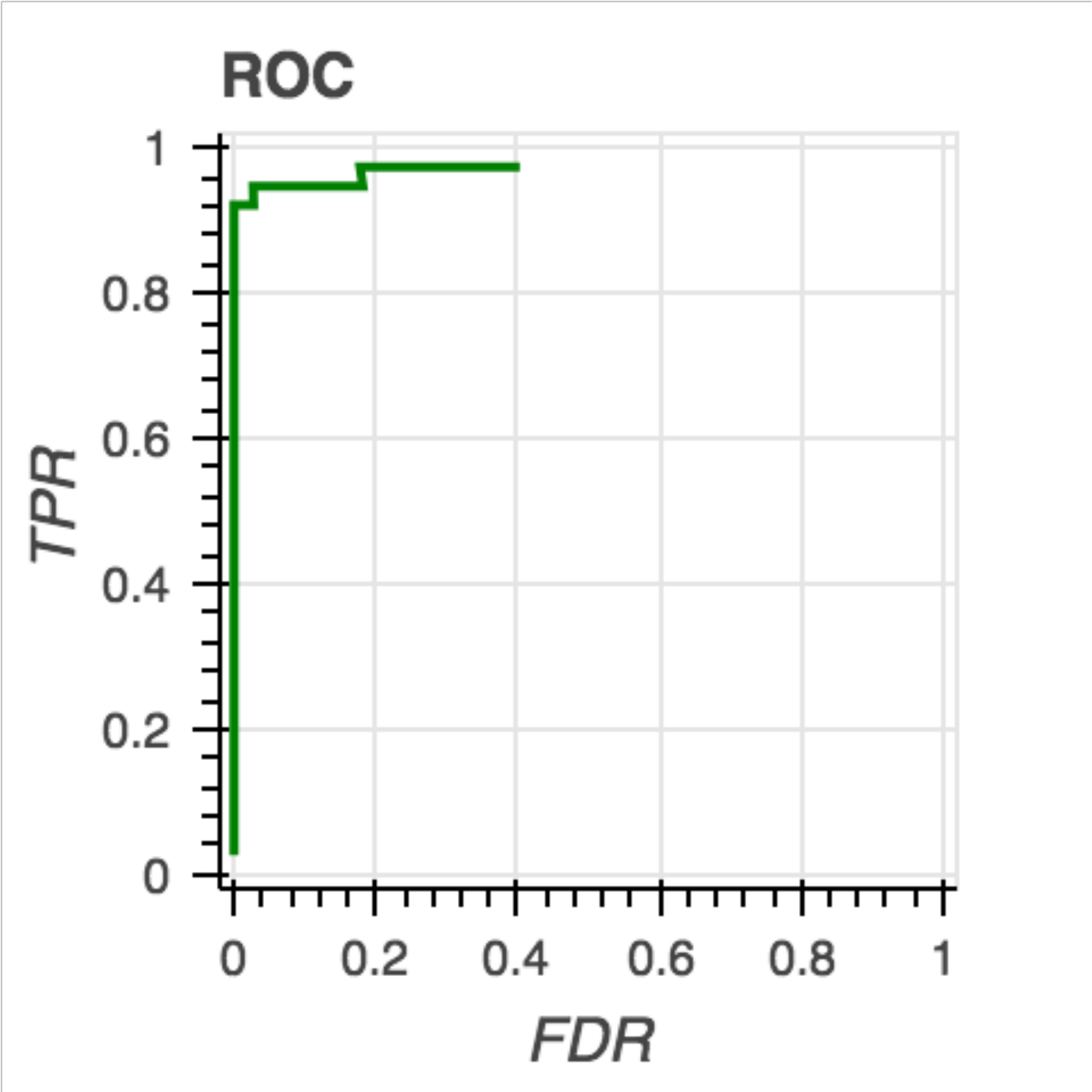}
\includegraphics[width=0.48\textwidth]{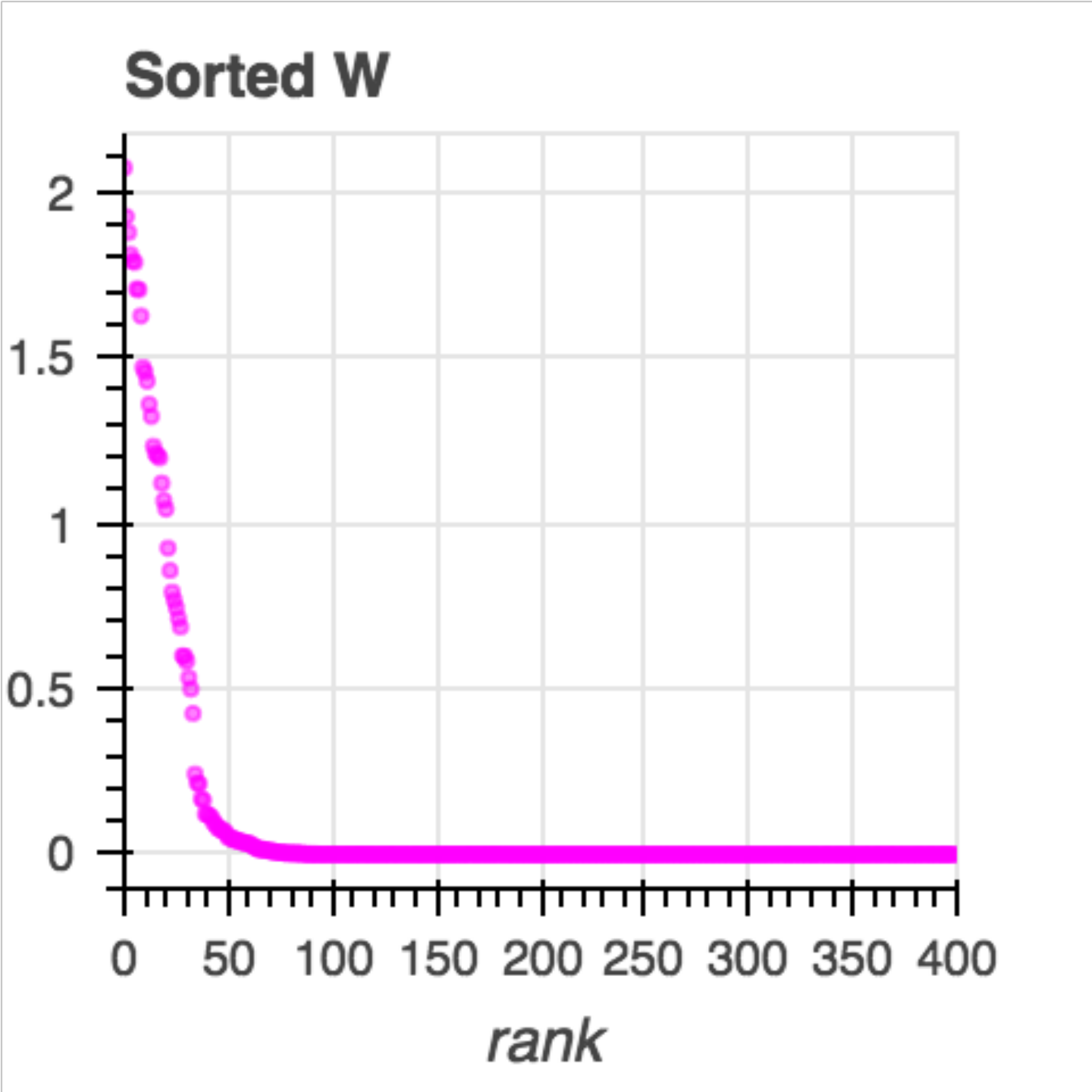}
\caption{ ER2, $ n= 1000 $ }
\end{subfigure}%
\begin{subfigure}[t]{0.49\textwidth}
\centering
\includegraphics[width=0.48\textwidth]{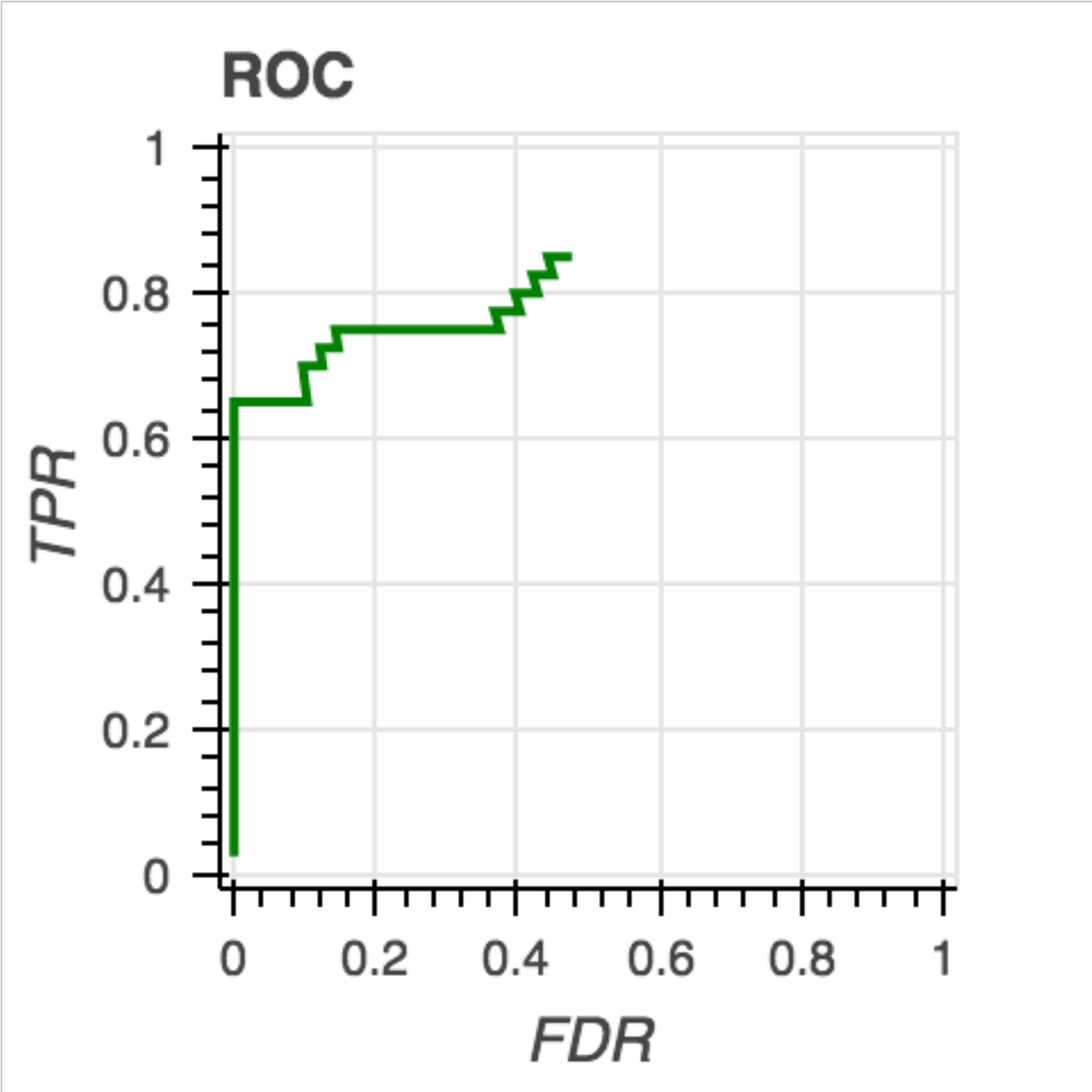}
\includegraphics[width=0.48\textwidth]{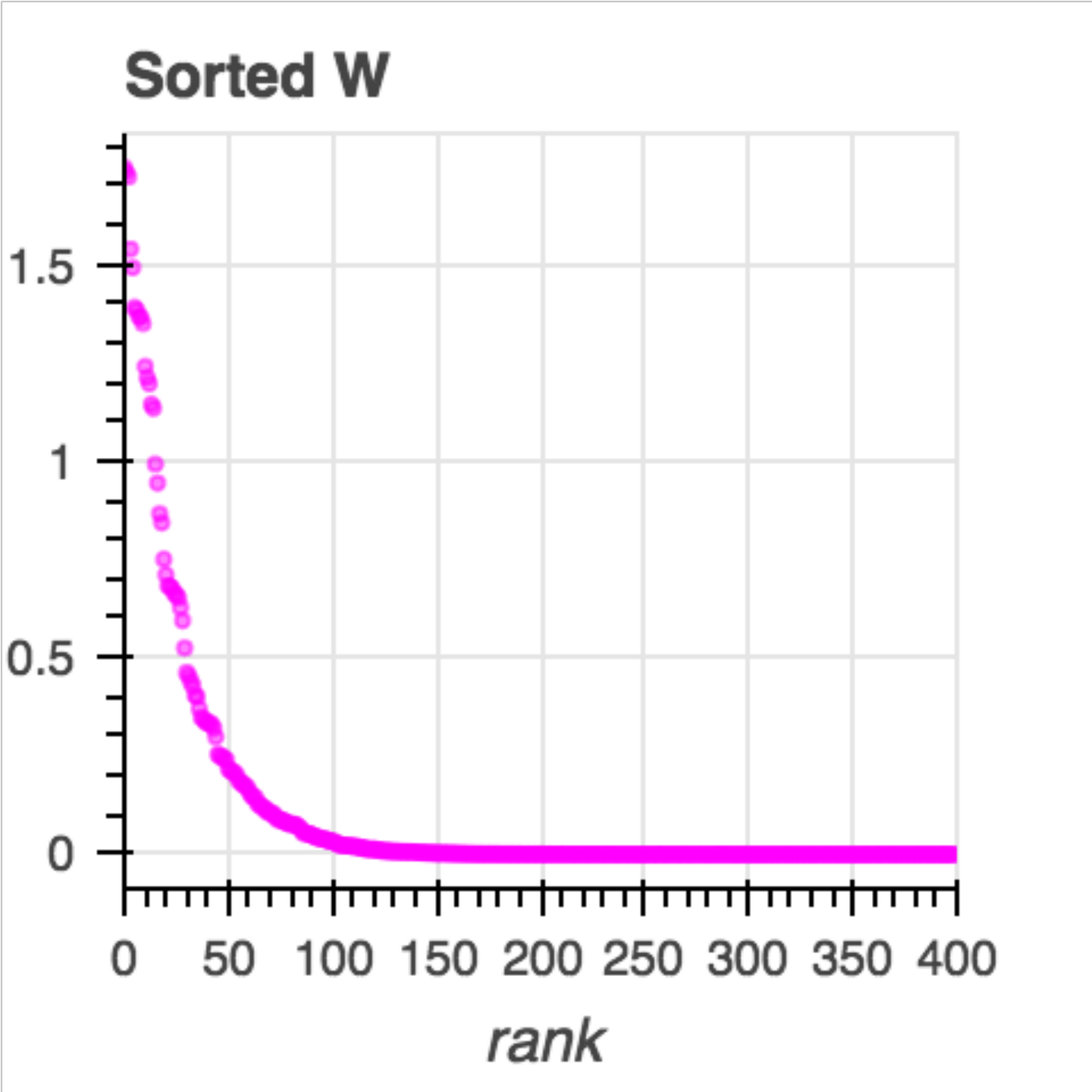}
\caption{ ER2, $ n= 20 $ }
\end{subfigure} 
\\
\begin{subfigure}[t]{0.49\textwidth}
\centering
\includegraphics[width=0.48\textwidth]{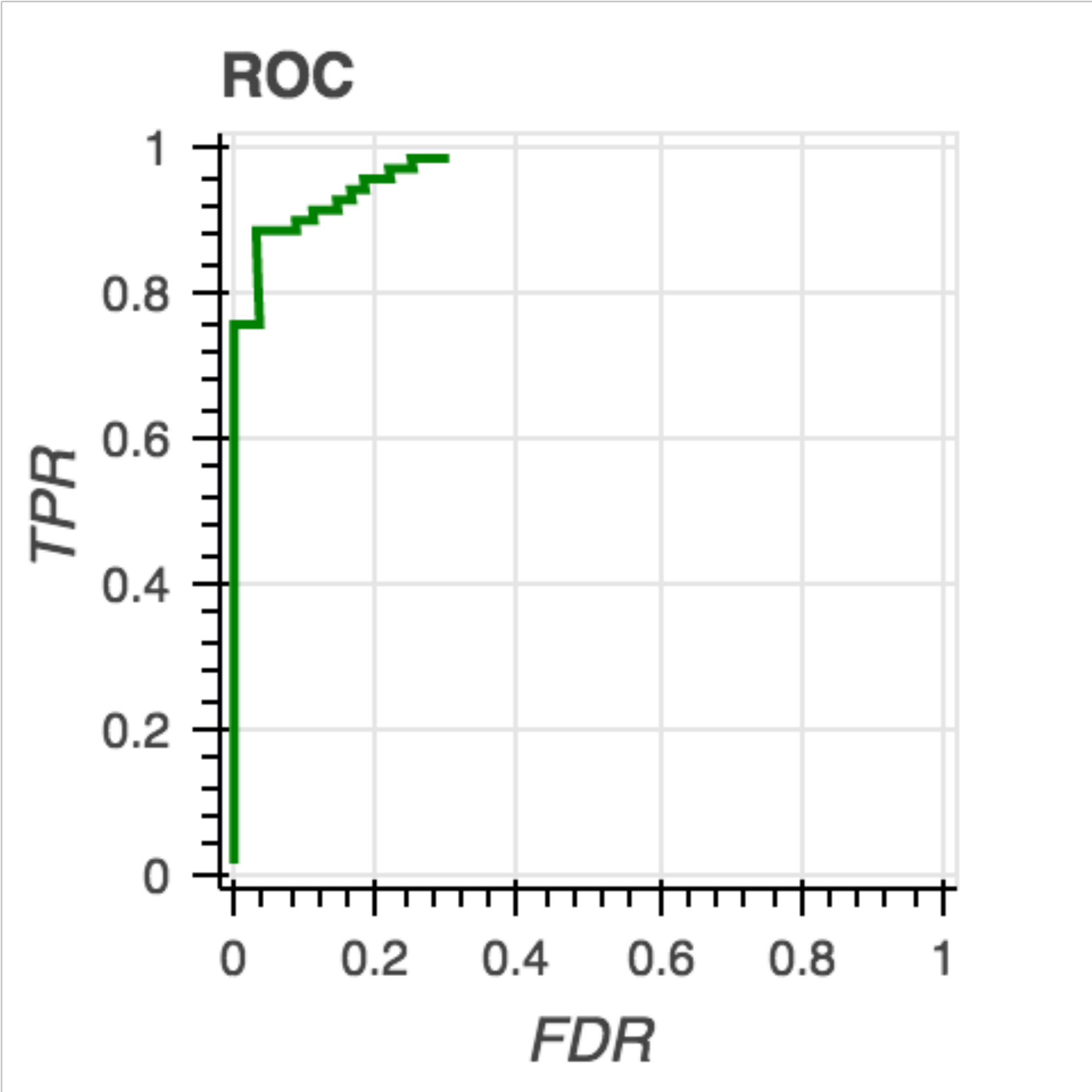}
\includegraphics[width=0.48\textwidth]{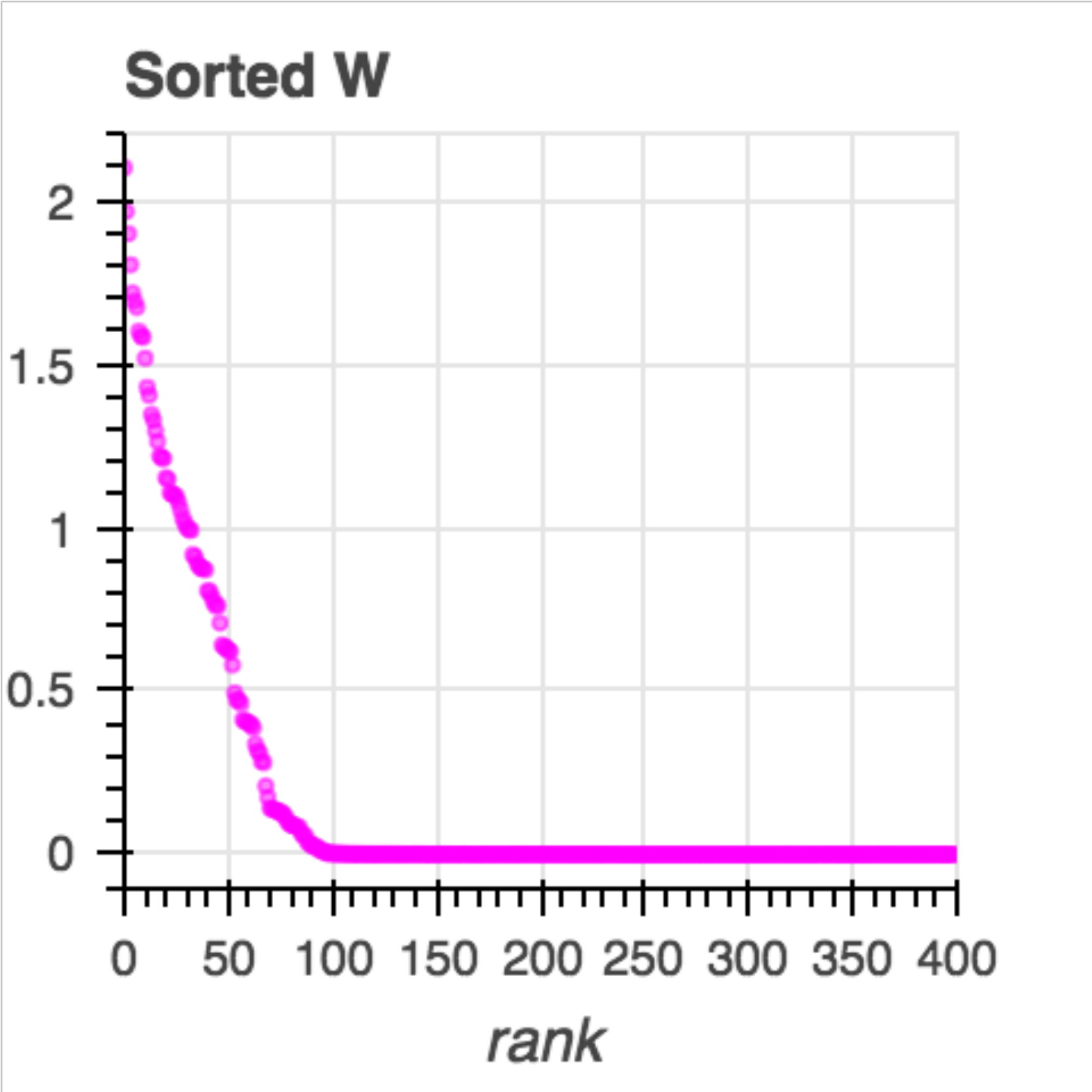}
\caption{ SF4, $ n= 1000 $ }
\end{subfigure}%
\begin{subfigure}[t]{0.49\textwidth}
\centering
\includegraphics[width=0.48\textwidth]{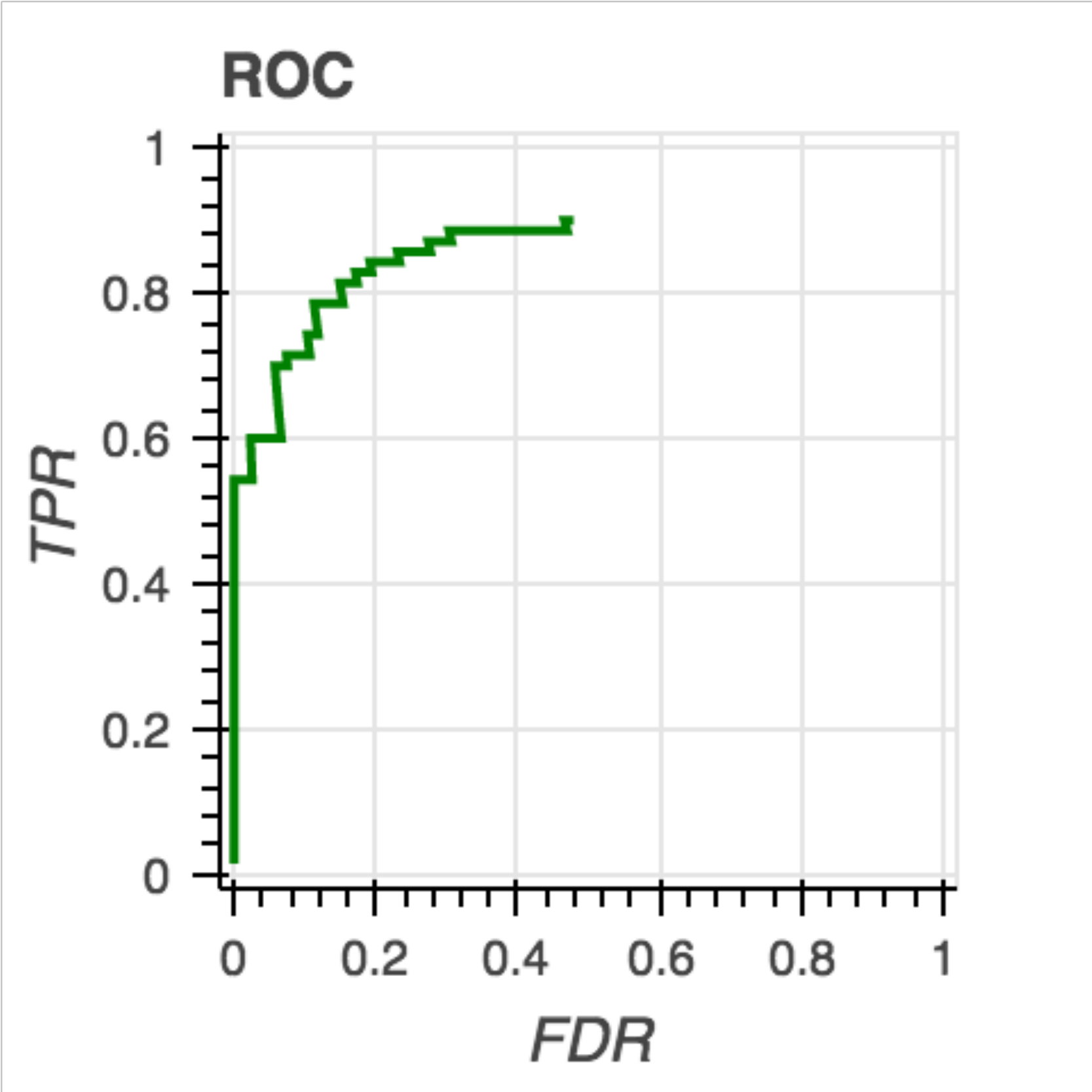}
\includegraphics[width=0.48\textwidth]{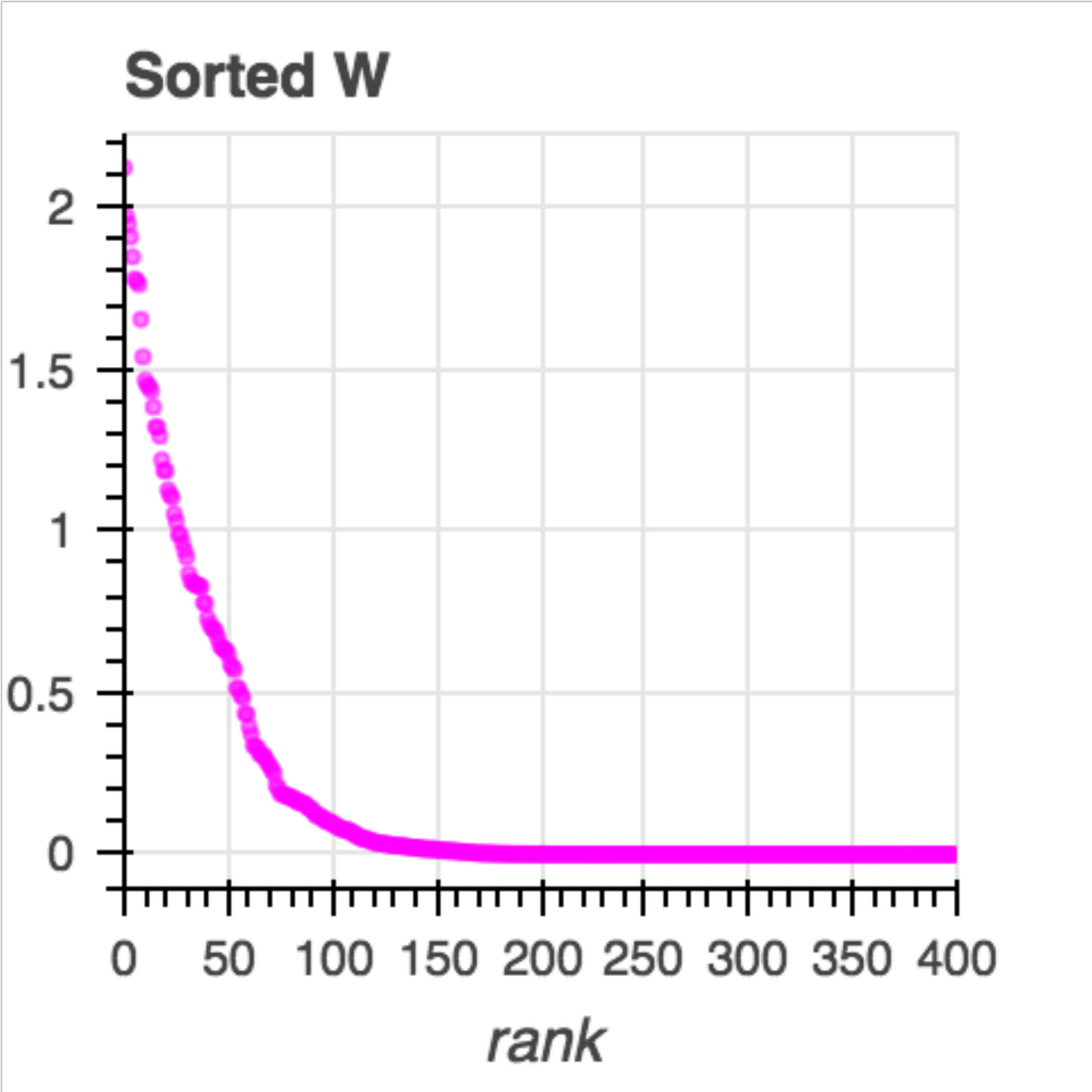}
\caption{ SF4, $ n= 20 $ }
\end{subfigure}%
\caption{Illustration of the effect of the threshold with $ d = 20 $ and $ \lambda = 0.1 $.
For each subfigure, ROC curve (left) shows FDR and TPR with varying level of threshold, and sorted weights (right) plots the entries of $ \estecp $ in decreasing order. }
\label{fig:compare:roc}
\end{figure}

\begin{figure}[t]
    \centering
    \includegraphics[width=0.48\textwidth]{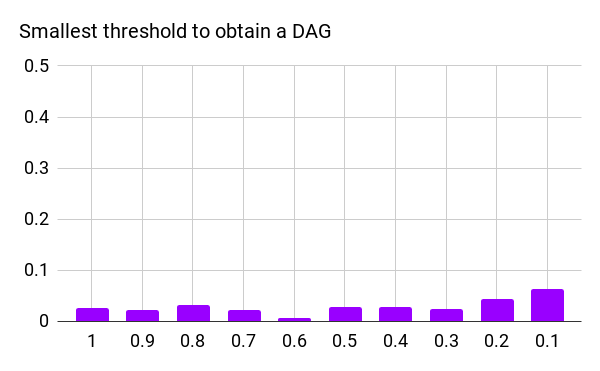}
    \includegraphics[width=0.48\textwidth]{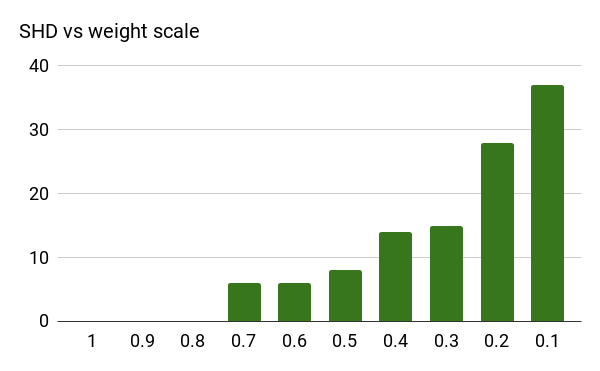}
    \caption{Varying weight scale $\alpha \in \{1.0, \dotsc, 0.1\}$ with $d=20$ and $n=1000$ on an ER-2 graph. (Left) Smallest threshold $\omega$ such that $\est$ is a DAG. (Right) SHD between ground truth and $\method$, lower the better. The minimum $\omega$ remains stable, while the accuracy of $\method$ drops as expected since the SNR decreases with $\alpha$.}
    \label{fig:scale}
\end{figure}

\section{Sensitivity of weight scale}
We investigate the effect of weight scaling to the $\method$ algorithm in Figure~\ref{fig:scale}.
In particular, we run experiments with $w_{ij}\in \alpha\cdot[0.5, 2]\cup-\alpha\cdot[0.5, 2]$ with $\alpha\in \{1.0, 0.9, 0.8, \dotsc, 0.1\}$. 
On the left, we plot the smallest threshold $\omega$ required to obtain a DAG (see Section~4.3) for different scale $\alpha$. 
Overall, across different values of $\alpha$, the variation in the smallest $\omega$ required is minimal. 
We also hasten to point out that this also decreases the signal to noise ratio (SNR), which more directly affects the accuracy. Indeed, in the figure on the right, we can observe (as expected) some performance drop when using smaller value of $\alpha$.

\section{Experiments}
\label{app:exp}

\subsection{Experiment details}
\label{app:exp:details}

We used simulated graphs from two well-known ensembles of random graphs:
\begin{itemize}
\item \emph{Erd\"os-R\'enyi (ER).} Random graphs whose edges are added independently with equal probability $p$. We simulated models with $d$, $2d$, and $4d$ edges (in expectation) each, denoted by ER-1, ER-2, and ER-4, respectively.
\item \emph{Scale-free networks (SF).} Networks simulated according to the preferential attachment process described in \citet{barabasi1999}. We simulated scale-free networks with $4d$ edges and $\beta=1$, where $\beta$ is the exponent used in the preferential attachment process.
\end{itemize}

Scale-free graphs are popular since they exhibit topological properties similar to real-world networks such as gene networks, social networks, and the internet. Given a random acyclic graph $\trueadj\in\bin$ from one of these two ensembles, we assigned edge weights independently from $\UniformDist\big([-2,-0.5]\cup[0.5,2])$ to obtain a weight matrix $\true=[\truecol_{1}\,|\,\cdots\,|\,\truecol_{d}]\in\wadj$. Given $\true$, we sampled $X=\true^{T}X+z\in\R^{d}$ according to the following three noise models:
\begin{itemize}
\item \emph{Gaussian noise} ($\Gauss$). $z\sim\normalN(0,I_{d\times d})$.
\item \emph{Exponential noise} ($\Exp$). $z_{j}\sim\ExpDist(1)$, $j=1,\ldots,d$.
\item \emph{Gumbel noise} ($\Gumbel$). $z_{j}\sim\GumbelDist(0,1)$, $j=1,\ldots,d$.
\end{itemize}

\noindent
Based on these models, we generated random datasets $\dat\in\R^{n\times d}$ by generating the rows i.i.d. according to one of the models above.
For each simulation, we generated $n$ samples for graphs with $d\in\{10,20,50,100\}$ nodes. To study both high- and low-dimensional settings, we used $n\in\{20, 1000\}$.

For each dataset, we ran FGS, PC, and LinGAM and $\method$ to compare the performance in reconstructing the DAG $B$. We used the following implementations:
\begin{itemize}
\item FGS and PC were implemented through the \texttt{py-causal} package, available at \url{https://github.com/bd2kccd/py-causal}. Both of these methods are written in highly optimized Java code.
\item LinGAM was implemented using the author's Python code: \url{https://sites.google.com/site/sshimizu06/lingam}.
\end{itemize}

Since the accuracy of PC and LiNGAM was significantly lower than either FGS or NOTEARS, we only report the results against FGS. A few comments on FGS are in order: 1) FGS estimates a graph, so it does not output any parameter estimates; 2) Instead of returning a DAG, FGS returns a CPDAG \citep{chickering2003}, which contains undirected edges; 3) FGS has a single tuning parameter that controls the strength of regularization. Thus, in our evaluations, we treated FGS favourably by treating undirected edges as true positives as long as the true graph had a directed edge in place of the undirected edge. For tuning parameters, we used the values suggested by the authors of the FGS code.

Denote the estimate returned by FGS by $\fgsest$. 
As discussed in Appendix~\ref{app:sensitivity-of-threshold}, we fix the threshold at $ \omega = 0.3 $. 
Having fixed $ \omega $, when there is no regularization, $\method$ requires no tuning. With $\ell_{1}$-regularization, $\methodreg$ requires a choice of $\lambda$ which wes selected as follows: Based on the estimate returned by FGS, we tuned $\lambda$ so that the selected graph (after thresholding) had the same number of edges as $\fgsest$ (or as close as possible). This ensures that the results are not influenced by hyperparameter tuning, and fairly compares each method on graphs of roughly the same complexity. Denote this estimate by $\est$ and the resulting adjacency matrix by $\adjest=\adj(\est)$.

\subsection{Metrics}

We evaluated the learned graphs on four common graph metrics: 1) False discovery rate (FDR), 2) True positive rate (TPR), 3) False positive rate (FPR), and 4) Structural Hamming distance (SHD). Recall that SHD is the total number of edge additions, deletions, and reversals needed to convert the estimated DAG into the true DAG.
Since we consider directed graphs, a distinction between True Positives (TP) and
Reversed edges (R) is needed: the former is estimated with correct direction
whereas the latter is not.
Likewise, a False Positive (FP) is an edge that is not in the undirected
skeleton of the true graph. 
In addition, Positive (P) is the set of estimated edges, True (T) is the set of
true edges, False (F) is the set of non-edges in the ground truth graph. 
Finally, let (E) be the extra edges from the skeleton, (M) be the missing edges
from the skeleton. The four metrics are then given by:
\begin{enumerate}[itemsep=0pt]
\item FDR $ = (\mathit{R} + \mathit{FP}) / \mathit{P} $ 
\item TPR $ = \mathit{TP} / \mathit{T} $
\item FPR $ = (\mathit{R} + \mathit{FP}) / \mathit{F} $
\item SHD $ = \mathit{E} + \mathit{M} + \mathit{R}
$.
\end{enumerate}

\subsection{Further evaluations}

Figure~\ref{fig:more-heatmap} shows learned weighted adjacency matrices for ER1 and ER4. 
One can observe the same trend: with large $ n $, both regularized and unregularized $ \method $ works well compared to FGS, and with small $ n $, due to identifiability, the unregularized $ \method $ suffers significantly, yet with the help of $ \ell_1 $-regularization we can still accurately recover the true underlying graph. 

\begin{figure}[t]
\centering
\begin{subfigure}[t]{0.138\textwidth}
\centering
\includegraphics[width=0.99\textwidth]{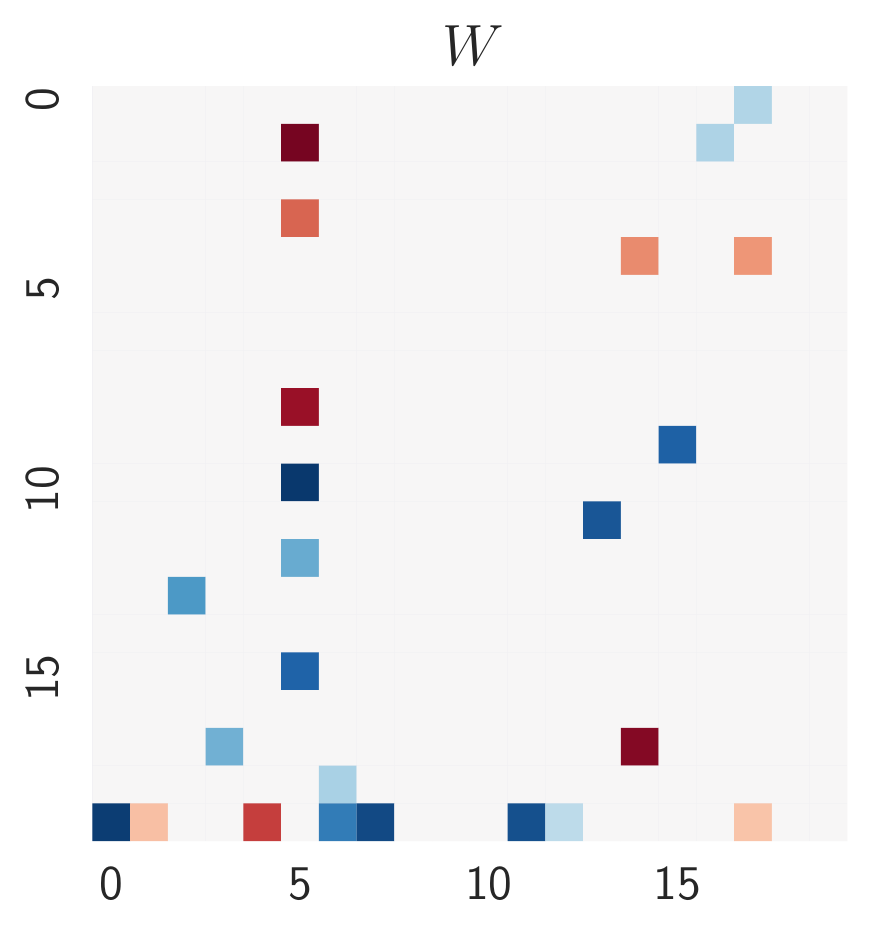}
\end{subfigure}%
~
\begin{subfigure}[t]{0.43\textwidth}
\centering
\includegraphics[width=0.99\textwidth]{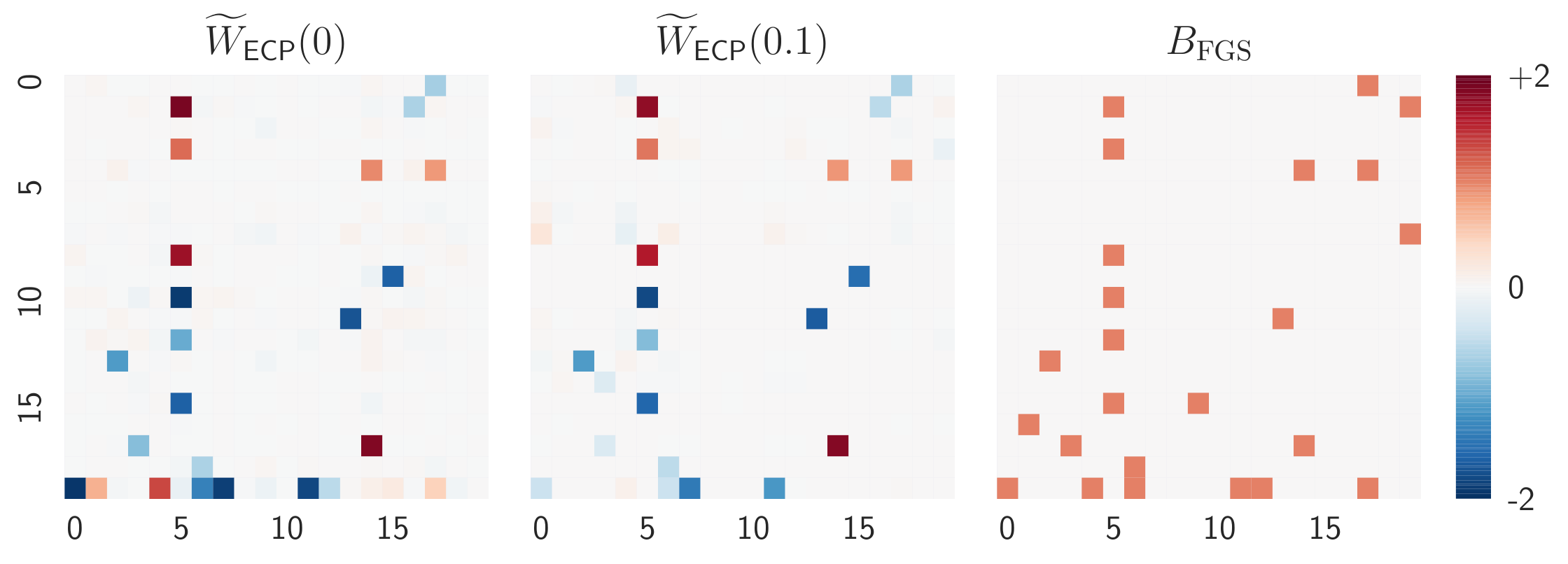}
\end{subfigure}%
\begin{subfigure}[t]{0.43\textwidth}
\centering
\includegraphics[width=0.99\textwidth]{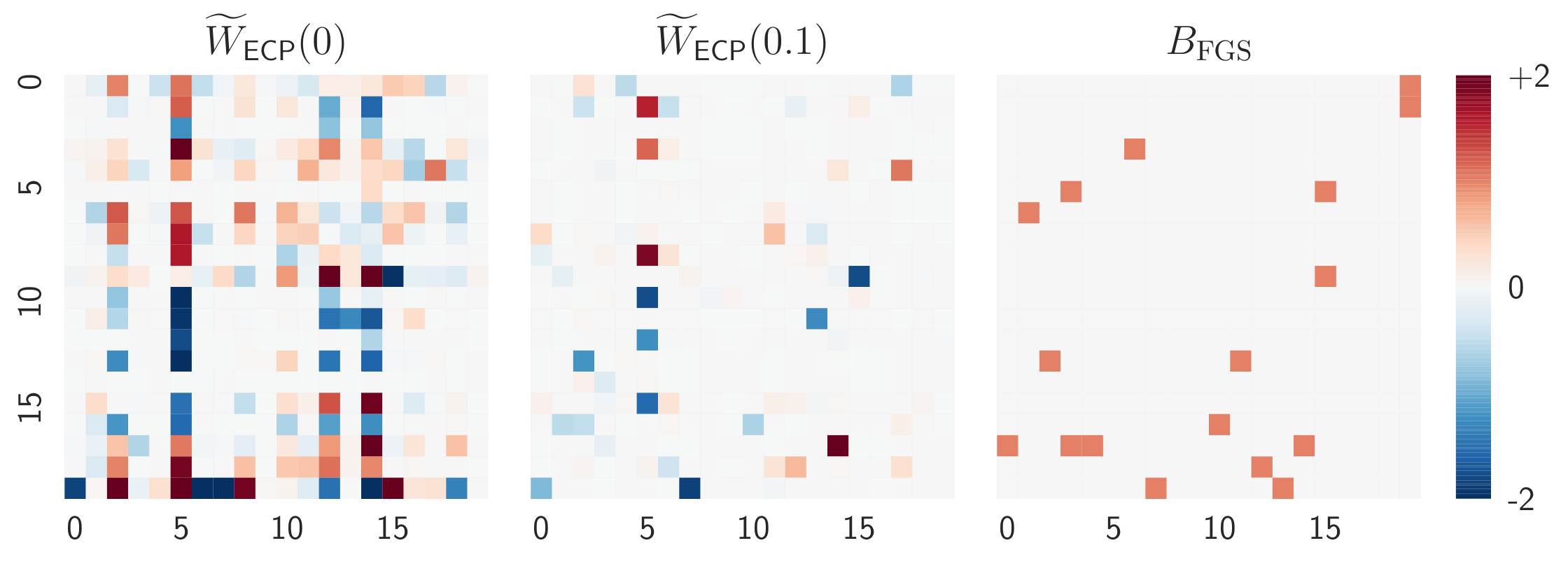}
\end{subfigure}
\begin{subfigure}[t]{0.138\textwidth}
\centering
\includegraphics[width=0.99\textwidth]{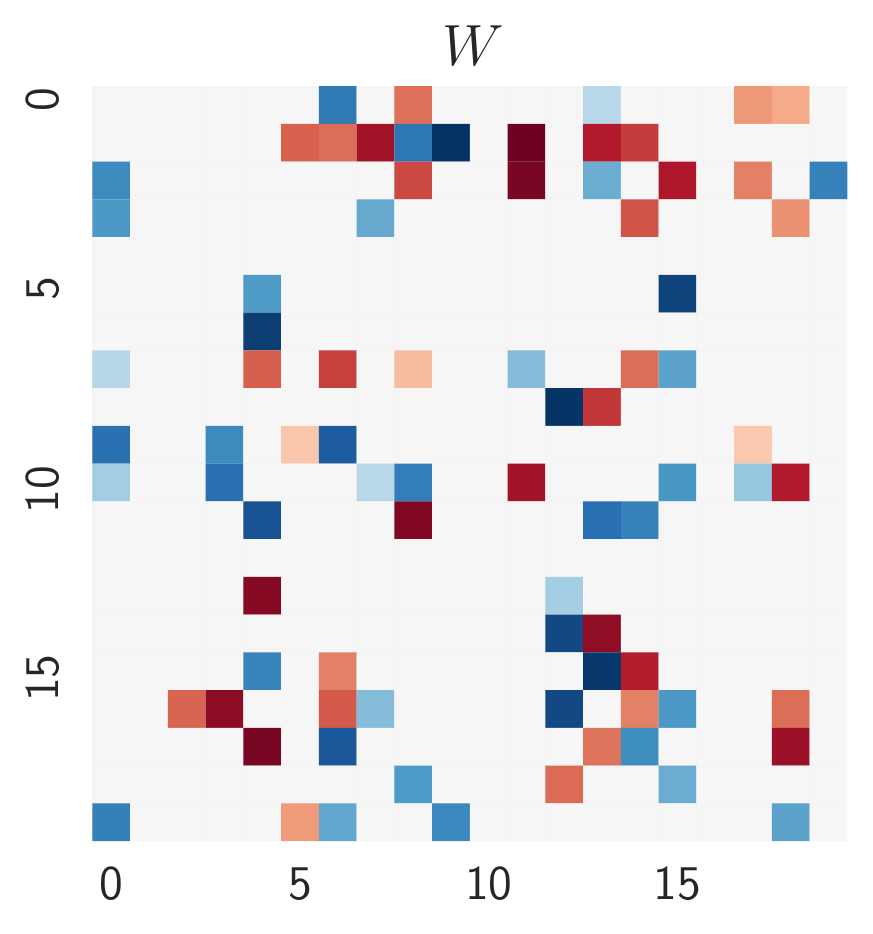}
\caption{true graph}
\end{subfigure}%
~
\begin{subfigure}[t]{0.43\textwidth}
\centering
\includegraphics[width=0.99\textwidth]{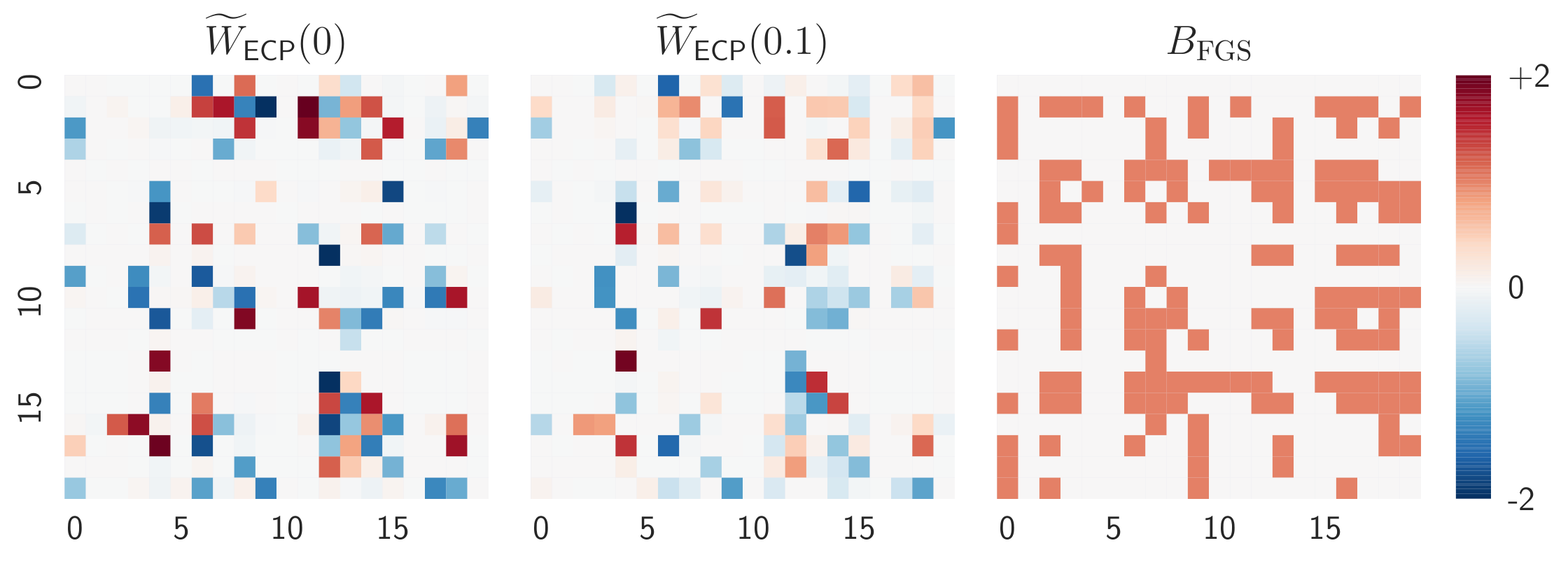}
\caption{estimate with $n=1000$}
\end{subfigure}%
\begin{subfigure}[t]{0.43\textwidth}
\centering
\includegraphics[width=0.99\textwidth]{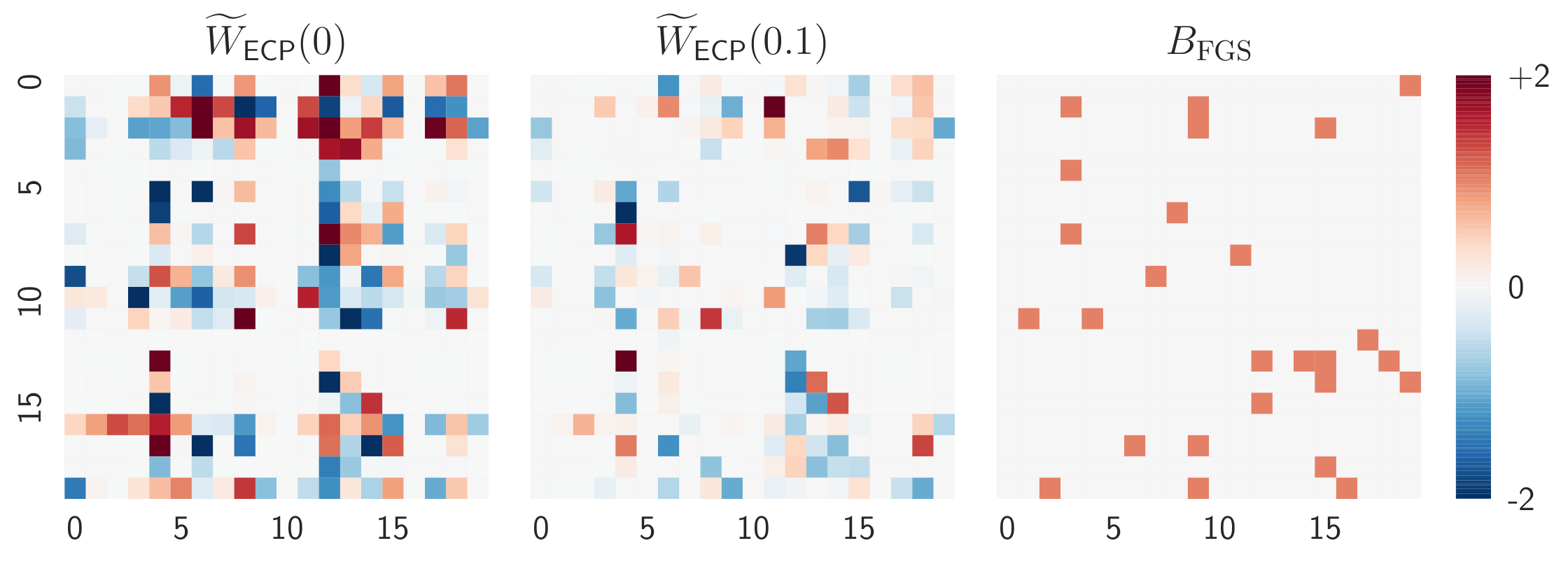}
\caption{estimate with $ n=20 $}
\end{subfigure}
\caption{Visual comparison of the learned weighted adjacency matrix on a 20-node graph with $ n=1000 $ (large samples) and $ n=20 $ (insufficient samples): $ \estecp(\lambda) $ is the proposed $ \method $ algorithm with $ \ell_1 $-regularization $ \lambda $, and $ \fgsest $ is the binary estimate of the baseline~\citep{ramsey2016}. 
Top row: ER1, bottom row: ER4.}
\label{fig:more-heatmap}
\end{figure}

Figure~\ref{fig:compare:all-n1000} and Figure~\ref{fig:compare:all-n20} shows structure recovery results for $ n=1000 $ and $ n=20 $ for various random graphs and SEM noise types. 
Other than fixed $ \omega $ as in the main paper, we also included the optimal choice of thresholding, marked as ``best''. 
The trend is consistent with the main text: our method in general outperforms FGS, without tuning $ \omega $ to the optimum for each setting.

\begin{figure}[t]
\centering
\includegraphics[width=0.99\textwidth]{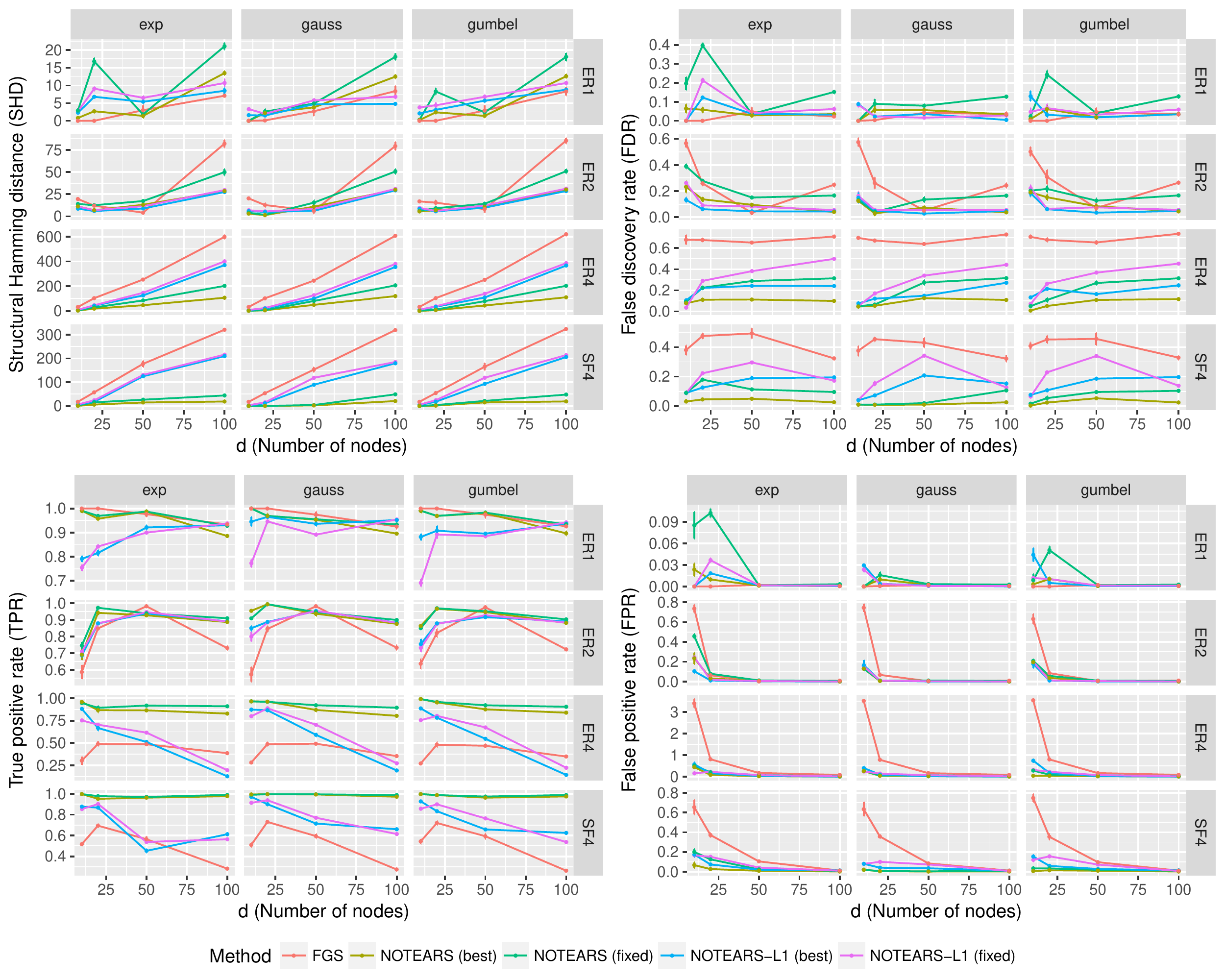}
\caption{Structure recovery results for $ n=1000 $. Lower is better, except for TPR (lower left), for which higher is better. 
Rows: random graph types, \{ER,SF\}-$ k $ = \{Erd\"os-R\'enyi, scale-free\} graphs with $ kd $ expected edges.
Columns: noise types of SEM. Error bars represent standard errors over 10 simulations.}
\label{fig:compare:all-n1000}
\end{figure}

\begin{figure}[t]
\centering
\includegraphics[width=0.99\textwidth]{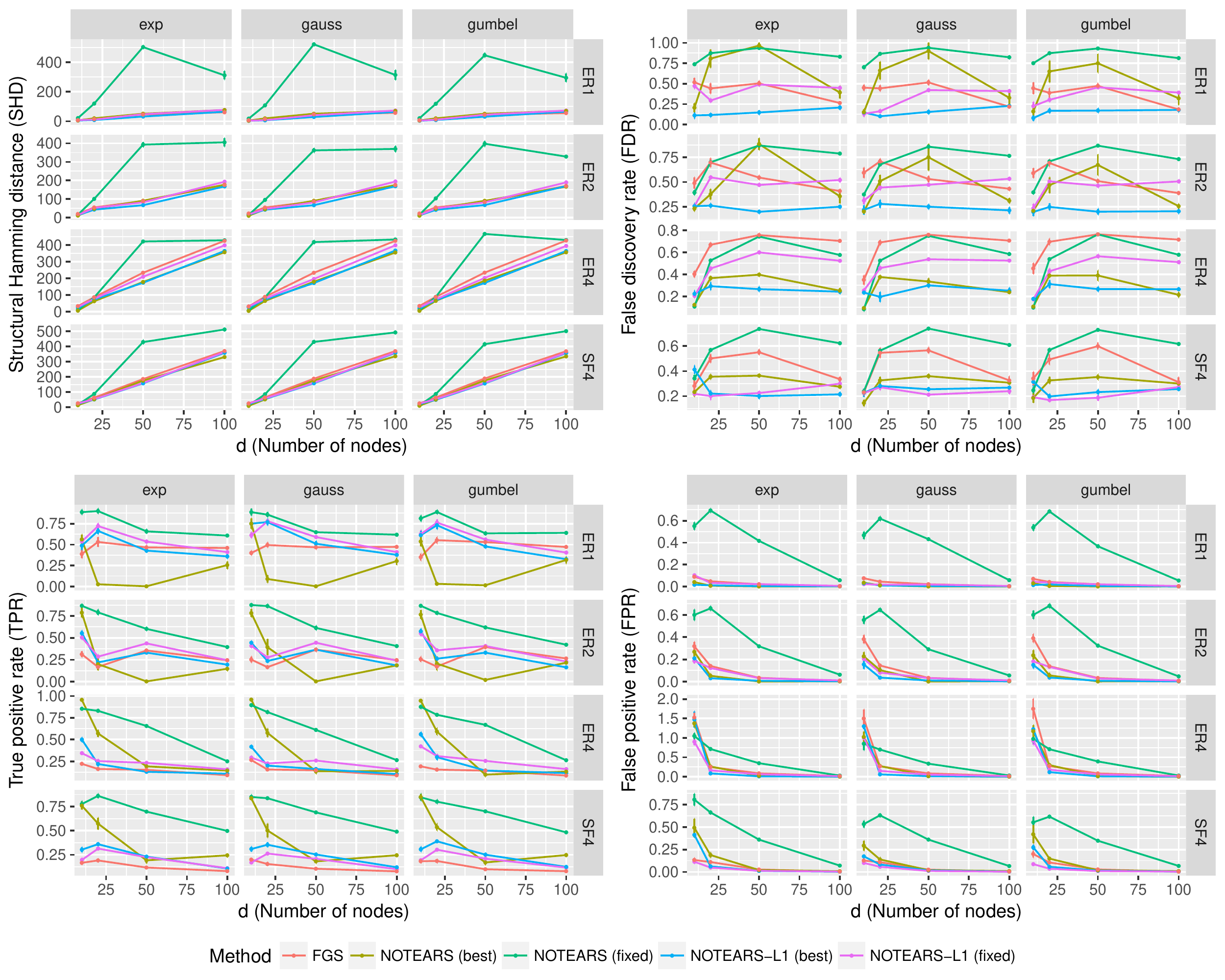}
\caption{Structure recovery results for $ n=20 $. Lower is better, except for TPR (lower left), for which higher is better. 
Rows: random graph types, \{ER,SF\}-$ k $ = \{Erd\"os-R\'enyi, scale-free\} graphs with $ kd $ expected edges.
Columns: noise types of SEM. Error bars represent standard errors over 10 simulations.}
\label{fig:compare:all-n20}
\end{figure}

Table~\ref{tab:exact:all} extends the global minimizer result for various random graph types. 
For each random graph and samples, we computed exact local scores as inputs to GOBNILP program, which finds the globally optimal structure for the given score. 
We can again observe that the difference between our estimate $ \est $ and global minimizer $ \estgob $ is small across all cases.

\begin{table}[t]
\centering
\caption{Comparison of $\method$ vs. globally optimal solution. $\Delta(\estgob,\est)=\score(\estgob) - \score(\est)$.}
\addtolength{\tabcolsep}{-2.5pt} 
\begin{tabular}{rrrrrrrrrr}
\toprule
$n$ & $\lambda$ & Graph & $\score(\true)$ & $\score(\estgob)$ & $\score(\est)$ & $\score(\estecp)$ & $\Delta(\estgob,\est)$ & $\norm{\est-W_{\gr}}$ & $\norm{\true-W_{\gr}}$ \\ 
  \midrule
    20 & 0.00 & ER1 & 5.01 & 3.69 & 5.19 & 3.73 & -1.50 & 0.09 & 3.54 \\ 
  20 & 0.50 & ER1 & 12.43 & 9.90 & 10.69 & 9.88 & -0.78 & 0.11 & 2.76 \\ 
  1000 & 0.00 & ER1 & 4.96 & 4.93 & 4.97 & 4.92 & -0.04 & 0.03 & 0.35 \\ 
  1000 & 0.50 & ER1 & 12.37 & 10.53 & 11.01 & 10.58 & -0.48 & 0.11 & 2.47 \\ 
  \midrule
20 & 0.00 & ER2 & 5.11 & 3.85 & 5.36 & 3.88 & -1.52 & 0.07 & 3.38 \\ 
  20 & 0.50 & ER2 & 16.04 & 12.81 & 13.49 & 12.90 & -0.68 & 0.12 & 3.15 \\ 
  1000 & 0.00 & ER2 & 4.99 & 4.97 & 5.02 & 4.95 & -0.05 & 0.02 & 0.40 \\ 
  1000 & 0.50 & ER2 & 15.93 & 13.32 & 14.03 & 13.46 & -0.71 & 0.12 & 2.95 \\ 
  \midrule
  20 & 0.00 & ER4 & 4.76 & 3.66 & 5.23 & 3.88 & -1.57 & 0.08 & 4.25 \\ 
  20 & 0.50 & ER4 & 28.24 & 16.38 & 19.81 & 16.82 & -3.44 & 0.15 & 6.66 \\ 
  1000 & 0.00 & ER4 & 5.03 & 5.00 & 5.50 & 4.97 & -0.50 & 0.00 & 0.46 \\ 
  1000 & 0.50 & ER4 & 28.51 & 18.29 & 29.91 & 18.69 & -11.61 & 0.13 & 5.76 \\ 
  \midrule
  20 & 0.00 & SF4 & 4.99 & 3.77 & 4.70 & 3.85 & -0.93 & 0.08 & 3.31 \\ 
  20 & 0.50 & SF4 & 23.33 & 16.19 & 17.31 & 16.69 & -1.12 & 0.15 & 5.08 \\ 
  1000 & 0.00 & SF4 & 4.96 & 4.94 & 5.05 & 4.99 & -0.11 & 0.04 & 0.29 \\ 
  1000 & 0.50 & SF4 & 23.29 & 17.56 & 19.70 & 18.43 & -2.13 & 0.13 & 4.34 \\ 
   \bottomrule
\end{tabular}
\addtolength{\tabcolsep}{2.5pt}
\label{tab:exact:all}
\end{table}

\end{document}